\theoremstyle{plain}
\newtheorem{theorem}{Theorem}[section]
\newtheorem{lemma}[theorem]{Lemma}
\theoremstyle{definition}
\theoremstyle{remark}
\icmltitlerunning{Exploring the Low-Pass Filtering Behavior in Image Super-Resolution}
\begin{document}

\twocolumn[
\icmltitle{Exploring the Low-Pass Filtering Behavior in Image Super-Resolution}




\begin{icmlauthorlist}
\icmlauthor{Haoyu Deng}{uestc}
\icmlauthor{Zijing Xu}{uestc}
\icmlauthor{Yule Duan}{uestc}
\icmlauthor{Xiao Wu}{uestc}
\icmlauthor{Wenjie Shu}{uestc}
\icmlauthor{Liang-Jian Deng}{uestc}

\end{icmlauthorlist}

\icmlaffiliation{uestc}{University of Electronic Science and Technology of China}

\icmlcorrespondingauthor{Liang-Jian Deng}{liangjian.deng@uestc.edu.cn}

\icmlkeywords{Machine Learning, ICML}

\vskip 0.3in
]



\printAffiliationsAndNotice{} 

\Crefname{section}{Sec.}{Sec.}
\Crefname{table}{Tab.}{Tab.}
\Crefname{figure}{Fig.}{Fig.}
\Crefname{lemma}{Lemma}{Lemma}
\Crefname{theorem}{Lemma}{Lemma}
\Crefname{equation}{Eq.}{Eq.}
\Crefname{proof}{Proof:0-=}{Proof:}
\Crefname{appendix}{Appx.}{Appx.}
\begin{abstract}
Deep neural networks for image super-resolution (ISR) have shown significant advantages over traditional approaches like the interpolation. However, they are often criticized as `black boxes' compared to traditional approaches with solid mathematical foundations. In this paper, we attempt to interpret the behavior of deep neural networks in ISR using theories from the field of signal processing. First, we report an intriguing phenomenon, referred to as `the sinc phenomenon.' It occurs when an impulse input is fed to a neural network. Then, building on this observation, we propose a method named Hybrid Response Analysis (HyRA) to analyze the behavior of neural networks in ISR tasks. Specifically, HyRA decomposes a neural network into a parallel connection of a linear system and a non-linear system and demonstrates that the linear system functions as a low-pass filter while the non-linear system injects high-frequency information. Finally, to quantify the injected high-frequency information, we introduce a metric for image-to-image tasks called Frequency Spectrum Distribution Similarity (FSDS). FSDS reflects the distribution similarity of different frequency components and can capture nuances that traditional metrics may overlook. Code, videos and raw experimental results for this paper can be found in: \href{https://github.com/RisingEntropy/LPFInISR}{https://github.com/RisingEntropy/LPFInISR}.
\end{abstract}
Please refer to \Cref{tab:symbols} for notation conventions.
\section{Introduction}
\begin{figure}[h]
    \centering
    \includegraphics[scale=0.13]{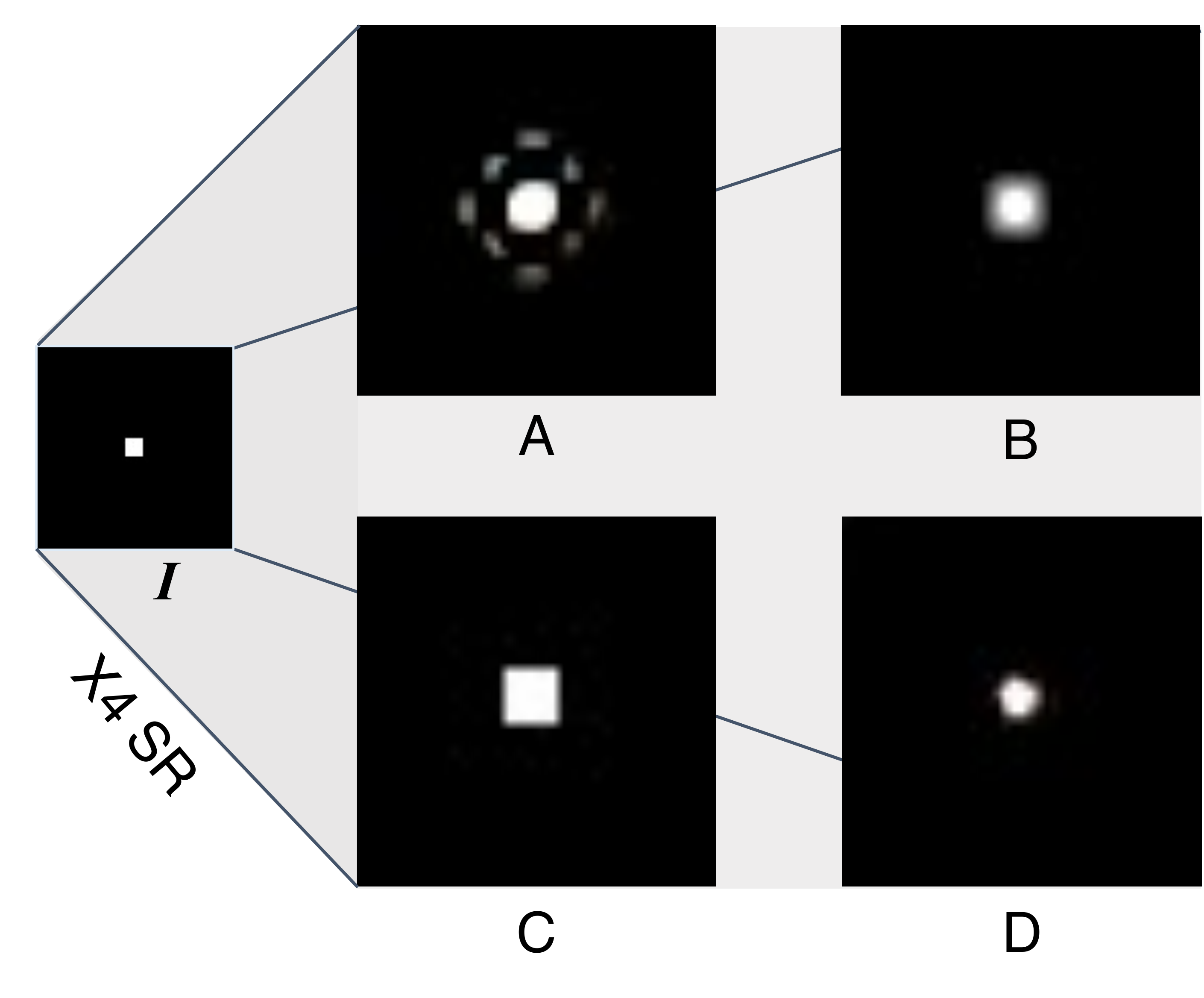}
    \caption{$I$ is an image in which only the central pixel is 1 and the other pixels are 0. What would the result look like if image I is super-resolved using a neural network, A, B, C, or D? Surprisingly, the answer is A. We name this phenomenon as \textbf{the sinc phenomenon}. In this paper, we give a possible explanation for this phenomenon.}
    \label{fig:title_figure}
\end{figure}
The goal of image super-resolution (ISR) is to reconstruct low-resolution (LR) images into high-resolution (HR) images through various techniques. In recent years, with advances in deep learning, growing ISR methods using neural networks are proposed, bringing the development of ISR into a new level. While impressive results persistently arise, the mechanism under ISR networks remain largely unexplored, leading to criticism that they are considered black boxes. In comparison, traditional methods, such as interpolation or filtering, have strong interpretability. Despite the principles of traditional methods and neural networks are different, we can still attempt to explain the behavior of ISR networks using theories from traditional methods. In this paper, following this line of thought, we successfully utilize theories from the field of signal processing techniques to explain the performance of neural networks in the ISR task.
\par
The target of the ISR task is to upsample a two-dimensional signal. In traditional signal processing theory \citep{dsp_book, signalandsystem}, a feasible method for upsample involves restoring a discrete low-sampling-rate signal to a continuous signal using a low-pass filter, and then sampling the continuous signal at a higher rate to obtain a high-sampling-rate signal. An intriguing aspect of this process is that when we try to upsample a Dirac $\delta$ signal, we will finally get a sinc signal since the sinc signal is the time-domain waveform of a low-pass filter, (for details about this, please refer to  \Cref{sec:preliminaries} ). Given this, we can conjecture: if neural networks exhibit similar behavior, then when attempt to super-resolve a Dirac $\delta$ signal, the resultant outcome would also be a sinc signal. As shown in \Cref{fig:title_figure}, we indeed observe this phenomenon, and we name it as `\textit{the sinc phenomenon}'. This phenomenon establishes a connection between traditional signal processing theory and the interpretability of neural networks, thus helping us form a deeper understanding of ISR networks.
\par
Building upon the sinc phenomenon, we further propose a method named HyRA\footnote{Pronounce as [\textipa{haI'rA:}]}, which stands for Hybrid Response Analysis. HyRA considers the neural network as a parallel combination of a linear system and a non-linear system with a zero impulse response. It further indicates that this linear system functions as a low-pass filter, while the non-linear system utilizes the learned prior knowledge to inject high-frequency information. By employing HyRA, we can analyze performance bottlenecks in neural networks, discerning whether the issue lies in inadequate preservation of low-frequency components or insufficient injection of high-frequency components. This analysis facilitates the proposal of targeted improvements for enhanced adaptability.
\par
Given that the non-linear component is injecting high-frequency information, there is a pressing need for a metric to quantitatively describe the extent of the injected high frequencies. Previous metrics, like PSNR, SSIM \citep{SSIM} and LPIPS \citep{LPIPS}, have not approached the evaluation of images from a frequency perspective. Therefore, we propose the frequency spectrum distribution similarity (FSDS), a metric that evaluates image quality based on the power distribution in the frequency spectrum.
\par
In summary, our contribution can be concluded as:
\begin{itemize}
    \item We report an intriguing phenomenon: the impulse responses of image super-resolution (ISR) networks are sinc functions, representing the temporal waveform of a low-pass filter. We name it the 'sinc phenomenon'. This observation helps to establish a connection between signal processing theory and neural networks. Moreover, we find that for a network, the more similar the impulse response is to the sinc function, the better performance it produces.
    \item In order to further explain the performance of neural networks in the ISR task through this phenomenon, we introduce HyRA. HyRA considers the neural network as a parallel combination of a linear system and a non-linear system with a zero impulse response. It points out that the linear system operates as a low-pass filter, while the non-linear system injects high-frequency information.

    \item To quantitatively describe the injection of high frequencies, we introduce the FSDS metric. FSDS measures image quality using frequency spectrum produced by FFT and can reflect high-frequency distortions that previous metrics fail to capture.
\end{itemize}
\section{Related works}
\subsection{Super Resolution Using Neural Networks}
Recent review articles in ISR include fixed-scale super-resolution  \citep{overview_fix} and arbitrary-scale super-resolution review  \citep{overview_arbsr}. There are various architectures of mainstream ISR backbone networks, including CNN-style backbones  \citep{edsr,rdn,IMDN,RCAN,CARN}, transformer-style backbones  \citep{swinir, EQSR} and GAN-style backbone networks  \citep{ESRGAN}, etc. Based on these backbones, researchers have proposed quantitative modules with various functions. For example, ArbSR  \citep{ArbSR} can expand a fixed-scale super-resolution network to an arbitrary-scale ISR network, LTE  \citep{LTE} can enhance local textures, etc. What worth mentioning is that LIIF  \citep{liif} introduces implicit neural representation into ISR for the first time, bringing a new approach for ISR. This paper mainly focuses on approaches that utilize CNN-style or transformer-style backbones. Except for network architectures, numerous datasets have been proposed to facilitate further research. Commonly used datasets for ISR includes Set5  \citep{set5}, Urban100  \citep{Urban100}, Flickr2K  \citep{Flickr2K}, SCI1K  \citep{ITSRN}, DIV2K  \citep{DIV2K}, etc. We evaluate the effectiveness of our proposed FSDS metric on DIV2K dataset. The large size of the DIV2K dataset contributes to increased reliability in our conclusions.

\subsection{Explaining the Behavior of Neural Networks}
Despite neural networks are often criticized as `black boxes,' predecessors have made remarkable efforts to mitigate this situation. Various previous researches have proposed plenty of methods to analyze the behavior of neural networks. 
Since \citeauthor{integrated_gradients}  \citep{integrated_gradients} introduce the integrated gradients (IG) for attribution in classification tasks, numerous researchers have expanded this method to various domains, broadening the scope of attribution beyond classification tasks. Based on IG, \citeauthor{chaodong} \citep{chaodong} propose LAM to analyze the impact of the local patch  on the entire ISR outcome. However, such a method requires manually determined hyper-parameters and baselines, thus introducing subjectivity. Several notable analysis methods utilizing the Fourier transform have been explored in the literature \citep{Training_behavior, xu1, Xu2020, zhang1}. Notably, \citeauthor{Training_behavior} \cite{Training_behavior} propose the Frequency-Principle, claiming its relevance to both convolutional neural networks (CNNs) and fully-connected deep neural networks. According to their proposition, these networks inherently adhere to the Frequency-Principle, wherein training data is systematically acquired in a sequential manner, progressing from low to high frequency. Unlike previous approaches these approaches, HyRA distinguishes itself by employing impulse response to probe the potential mechanisms of deep neural networks in the context of the ISR task.
\section{Preliminaries\label{sec:preliminaries}}

\Cref{sec:system_and_response}-\Cref{sec:freq_overlap} provide a brief overview of signal processing concepts for readers who are not familiar with it. \Cref{sec:system_and_response} introduces the concepts of signals and systems, along with the computation of responses in Linear Time-Invariant (LTI) systems. \Cref{sec:samp_rec} covers the processes of signal sampling and reconstruction. \Cref{sec:freq_overlap} delves into the phenomenon of spectrum aliasing, a factor contributing to the ill-posed nature of the ISR task. And we will introduce applying low-pass filter for ISR here.


 We can employ signal recovery methods to achieve image super-resolution (ISR). Initially, we conceptualize an image as a series of impulse trains in a two-dimensional continuous space, with varying densities representing different resolutions. Then, for the low-resolution image, we begin by implementing low-pass filtering, following the procedure outlined in \Cref{sec:samp_rec}, to obtain the continuous image $I^{\mathrm{cont}}$, This process can be mathematically described as:
\begin{equation}    I^{\mathrm{cont}}_{x,y}=sinc^{\mathrm{\omega}}_{x,y}*I^{\mathrm{LR}}_{x,y},
\end{equation}
where $*$ denotes convolution, $I^{LR}_{x,y}$ is the low resolution image with variant $x, y$ and $I^{\mathrm{cont}}_{x,y}$ is the continuous signal. $sinc^{\omega}_{x,y}$ is a two-dimensional sinc function with parameter $\omega$\footnote{Please refer to \Cref{tab:ft-pair} for Fourier transform pairs}, whose frequency spectrum is an ideal low-pass filter with a passband of $0\sim\omega$. Subsequently, we sample the `conceptually continuous signal' at an elevated sampling rate to acquire a more densely populated two-dimensional sequence of impulse trains, i.e., an image with higher resolution denoted as $I^{SR}$:
\begin{equation}
    I^{SR}_{x, y} = I^{\mathrm{cont}}_{x, y} \cdot s^{\Delta X, \Delta Y}_{x, y}.
\end{equation}
In the equation, $s^{\Delta X, \Delta Y}_{x, y}$ denotes the two-dimensional impulse trains with intervals of $\Delta X$ in $x$ axis and $\Delta Y$ in $y$ axis.
\par
\begin{figure}[h]
     \centering
     \includegraphics[scale=0.12]{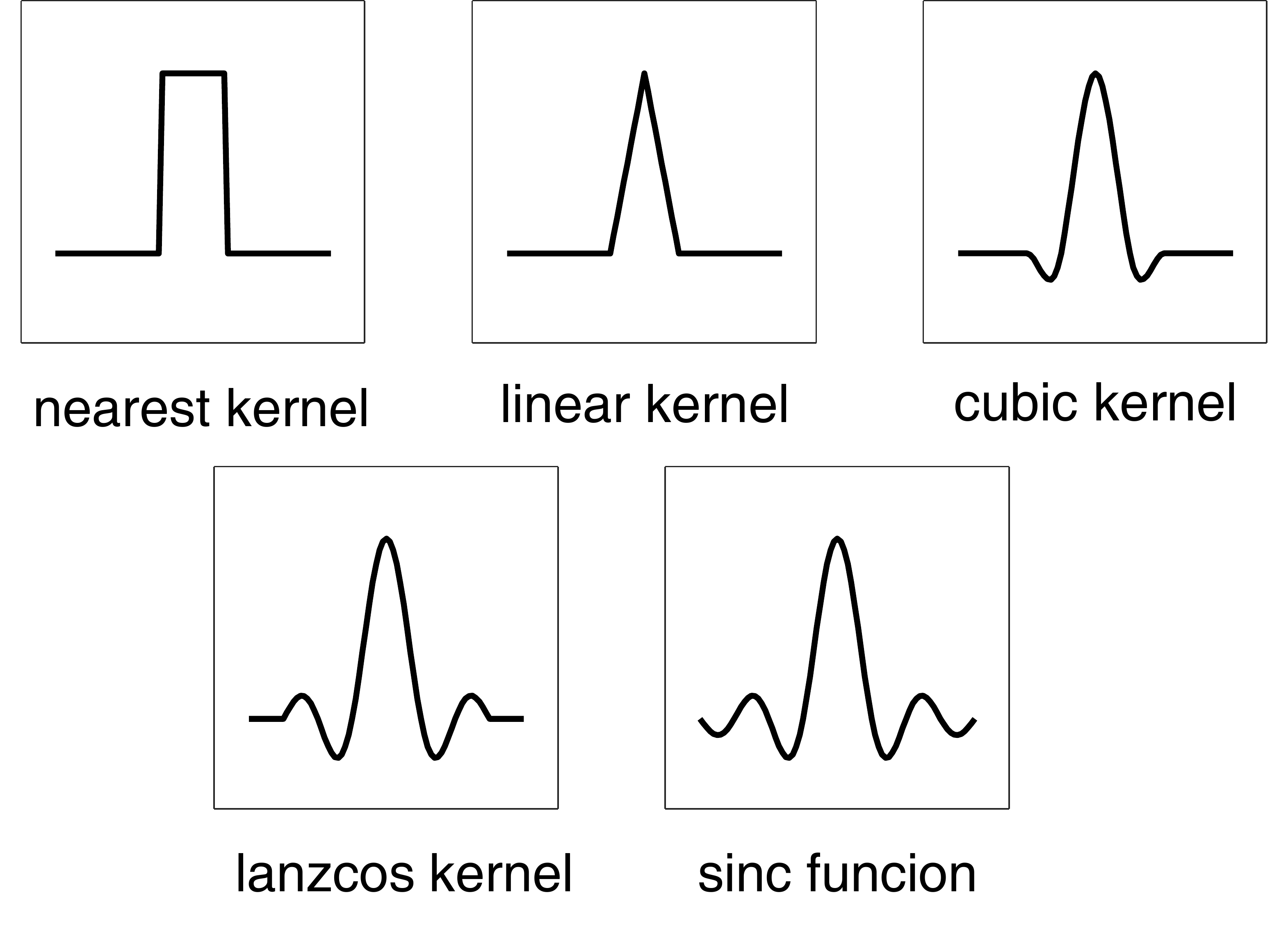}
     \caption{Various interpolation kernels for ISR. They can all be seen as an approximation of sinc function.}
     \label{fig:convolution kernels}
\end{figure}
In fact, commonly used interpolation kernels for ISR, such as nearest-neighbor interpolation, linear interpolation, cubic interpolation, etc., can be seen as approximations of the sinc function considering a balance between computational complexity and effectiveness, as illustrated in \Cref{fig:convolution kernels}. Taking into account the similarity of these interpolation kernels, in this paper, we collectively refer to these parameter-free methods as low-pass filter-based super-resolution methods.
\section{Method}

\subsection{Hybrid Response Analysis (HyRA)\label{sec:HyRA}}
\begin{figure}
     \centering
     \includegraphics[scale=0.1]{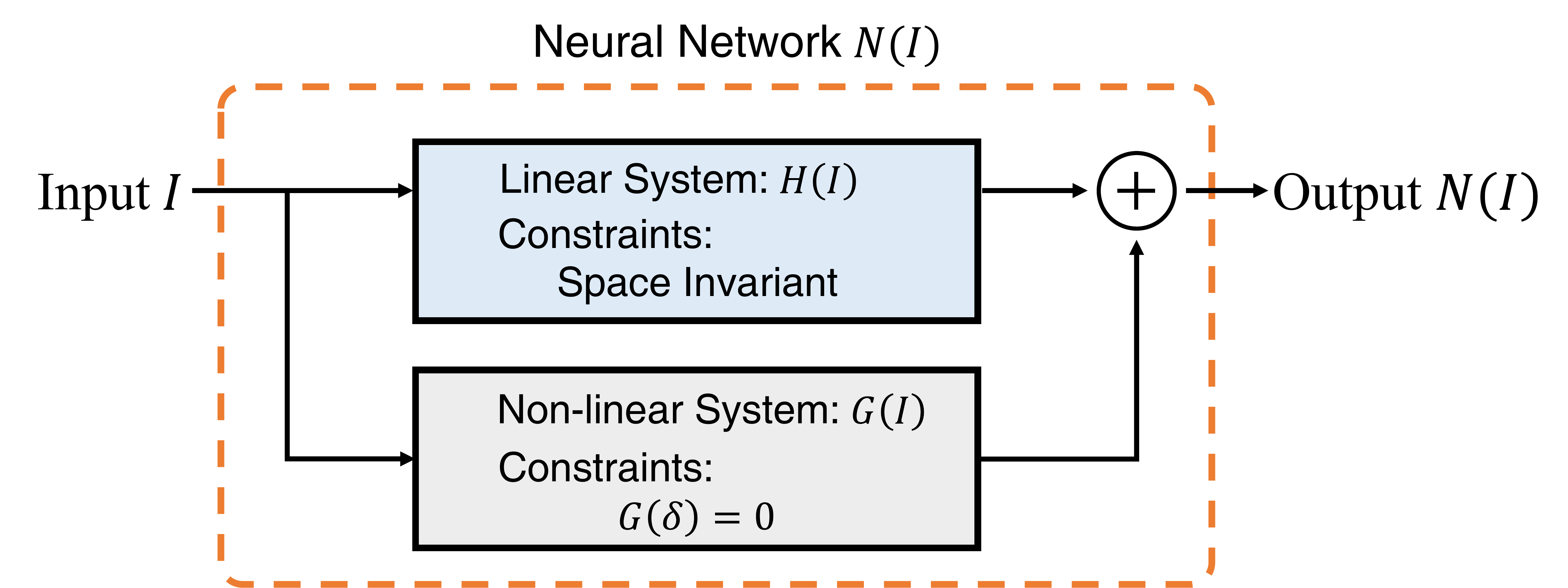}
     \caption{Conceptual diagram of HyRA's core idea.}
     \label{fig:HyRA workflow}
\end{figure}
In this section, we describe the proposed Hybrid Response Analysis (HyRA), which treats the neural network as a combination of a linear system and a non-linear system. Through the impulse response, we can calculate a linear time invariant (LTI) system's output from any input using the convolution operation (see \Cref{sec:system_and_response}). However, since neural networks are nonlinear systems, we cannot apply convolution to analyze them. To further explore the network features, we need to split it into a linear system and a non-linear system, i.e., HyRA. The core concept HyRA is illustrated in \Cref{fig:HyRA workflow}. We denote an ISR network as $N(I)$, where $I$ is the input image. $N(I)$ is a non-linear system that can be expressed as the sum of a linear system and a non-linear system:
\begin{equation}
N(I) = H(I) + G(I)\label{eq:N=HG}.
\end{equation}
In the equation, $H(I)$ represents a linear system, and $G(I)$ represents a non-linear system. Without constraints, such a representation is meaningless because $H(I)$ can be arbitrarily chosen, leading to an infinite variety of representations with the same form but different meanings. To give meaning to this representation, we introduce a constraint: the impulse response of $G(I)$ is zero. With this constraint, both $H(I)$ and $G(I)$ can be uniquely determined. \Cref{lemma:GI=0} demonstrates that under this constraint, $N(I)$ can still be expressed in the form of Eq. \ref{eq:N=HG}. This straightforward method is the essence of HyRA.
\par

For the ISR task, there is a distinctive property known as a `spatially invariant system' \citep{miller1992spatially} associated with it. Consider the definition of time-invariant systems as mentioned in \Cref{sec:system_and_response}, we can naturally extend the concept of in-variance from one-dimensional to two-dimensional space and the definition of spatially invariant systems is: when the input is $I_{x, y}$, the output is $G(I_{x,y}) = O(x, y)$; when the input becomes $I' = I_{x - x_0, y - y_0}$, the output should be $G(I') = O(x - x_0, y - y_0)$. For convolution based architectures, we can easily prove its spatial invariance (see the proof below). For transformer-based architectures, we can still use experiments to prove the spatial invariance (see \Cref{fig:space_invariance}).
\begin{proof}
A convolution operation can be defined as: 
$$ Conv_{i,j}=\sum_{p,q}I_{i-p,j-q}K_{p,q}. $$ 
Then, the shifting operation can be defined as: 
$$ \operatorname{Sh}(i,j)\to(i+k,j+l). $$ 
Combine these two, we then have: 
$$ \begin{aligned} 
Conv_{\operatorname{Sh}(i,j)}&=Conv_{i+k,j+l}\\ &=\sum_{p,q}I_{i+k-p,j+l-q}K_{p,q}\\ &=\operatorname{Sh}(\sum I_{i-p,j-q}K_{p,q}) \\ &=\operatorname{Sh}(Conv_{i,j}) . \end{aligned} $$ 
This is the invariance of a single convolution layer, and still holds for more layers.
\end{proof}


\par
According to HyRA, when we input a Dirac $\delta$ signal to the neural network, we can get the impulse response of the linear system (please recall \Cref{sec:system_and_response}), denoted as $H(\delta)$. For any input $I$, the response of the linear space invariant system can be obtained by convolving the input with the obtained impulse response, which can be expressed as:
\begin{equation}
     H(I)=I*H(\delta),
\end{equation}
where $*$ means the convolution operation. Although the response of the non-linear component cannot be directly computed, if we obtain the final output of the neural network, the non-linear part can be deduced by subtracting the response of the linear component from the final output, namely the non-linear response can be computed as:
\begin{equation}
    \begin{aligned}
        G(I) &= N(I) - H(I)\\
        &= N(I) - I * H(\delta).
    \end{aligned}
\end{equation}


\begin{lemma}
     A neural network $N(I)$ can be expressed as a combination of a linear system $H(I)$ and a non-linear system with an impulse response of zero, i.e., $N(I) = H(I) + G(I)$, where $G(\delta) = 0$. Here, $\delta$ represents the Dirac delta function.
     \label{lemma:GI=0}
\end{lemma}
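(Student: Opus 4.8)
The plan is to prove the statement by explicit construction and then verify the impulse-response constraint. The natural choice is to let $H$ be the linear space-invariant system whose impulse response is precisely the network's response to $\delta$: define $H(I) := I * N(\delta)$ for every admissible input $I$. As recalled in \Cref{sec:system_and_response}, convolution with a fixed kernel is a linear space-invariant operator (and the computation just above shows convolutional layers are space-invariant), so $H$ is a bona fide linear system. Having fixed $H$, define the remainder $G(I) := N(I) - H(I) = N(I) - I * N(\delta)$. By construction $N(I) = H(I) + G(I)$, which is exactly Eq.~\ref{eq:N=HG}.

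The only thing left to check is the constraint $G(\delta) = 0$. For this I would invoke the sifting property of the Dirac delta from \Cref{sec:system_and_response}, namely $\delta * f = f$ for any signal $f$. Applying it with $f = N(\delta)$ gives $H(\delta) = \delta * N(\delta) = N(\delta)$, hence $G(\delta) = N(\delta) - H(\delta) = 0$. This establishes existence of the decomposition with the required property, and note that no claim of strict nonlinearity of $G$ is needed: $G$ is merely the leftover ``non-linear component,'' which in degenerate cases (e.g.\ $N$ already linear) is simply the zero system.

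To also justify the uniqueness asserted in the surrounding discussion, I would argue as follows: suppose $N = H_1 + G_1 = H_2 + G_2$ with each $H_i$ linear space-invariant and $G_i(\delta) = 0$. The difference $D := H_1 - H_2 = G_2 - G_1$ is linear and satisfies $D(\delta) = G_2(\delta) - G_1(\delta) = 0$; since a linear space-invariant system is fully characterized by convolution with its impulse response, $D(I) = I * D(\delta) = I * 0 = 0$ for all $I$, so $H_1 = H_2$ and therefore $G_1 = G_2$.

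The proof is short, so there is no serious computational obstacle; the only point requiring care is conceptual — one must commit to reading ``linear system'' as ``linear space-invariant operator,'' so that the representation formula $H(I) = I * H(\delta)$ is legitimate and $N(\delta)$ can serve as a genuine impulse response. Once that convention (established in \Cref{sec:system_and_response}, together with the spatial-invariance argument above) is in place, \Cref{lemma:GI=0} follows immediately from the sifting property of $\delta$.
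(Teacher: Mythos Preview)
Your proof is correct. The paper's argument is organized slightly differently: it starts from an arbitrary decomposition $N = H + G$, does a case split on whether $G(\delta)=0$, and in the nontrivial case shifts the term $G(\delta)*I$ from $G$ into $H$ to force the constraint. Your construction is the direct specialization of that idea (take the trivial starting decomposition $H=0$, $G=N$, so the adjusted linear part becomes exactly $I*N(\delta)$), and it avoids the case analysis entirely. You also supply the uniqueness argument that the paper asserts in the surrounding text but does not prove, and you correctly flag that $G$ need not be strictly non-linear in degenerate cases---a point the paper's own proof glosses over. The only caveat you already identify is the one that matters: the whole argument rests on reading ``linear system'' as linear \emph{space-invariant}, so that $H$ is determined by its impulse response.
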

\begin{proof}$\\$
1) When $G(\delta) = 0$, the conclusion holds.

2) When $G(\delta) \neq 0$, Let $H_1(I) = H(I) + G(\delta) * I$ and $G_1(I) = G(I) - G(\delta) * I$. In this case, $H_1(I)$ remains a linear system and $G'(I)$ remains a non-linear system. The equation $N(I) = H_1(I) + G_1(I)$ holds, and it satisfies $G_1(\delta) = 0$.
\end{proof}
\subsubsection{$H(I)$ is a low-pass filter\label{sec:hi_is_lpfing}}
\begin{figure}[h]
    \centering
    \includegraphics[scale=0.1]{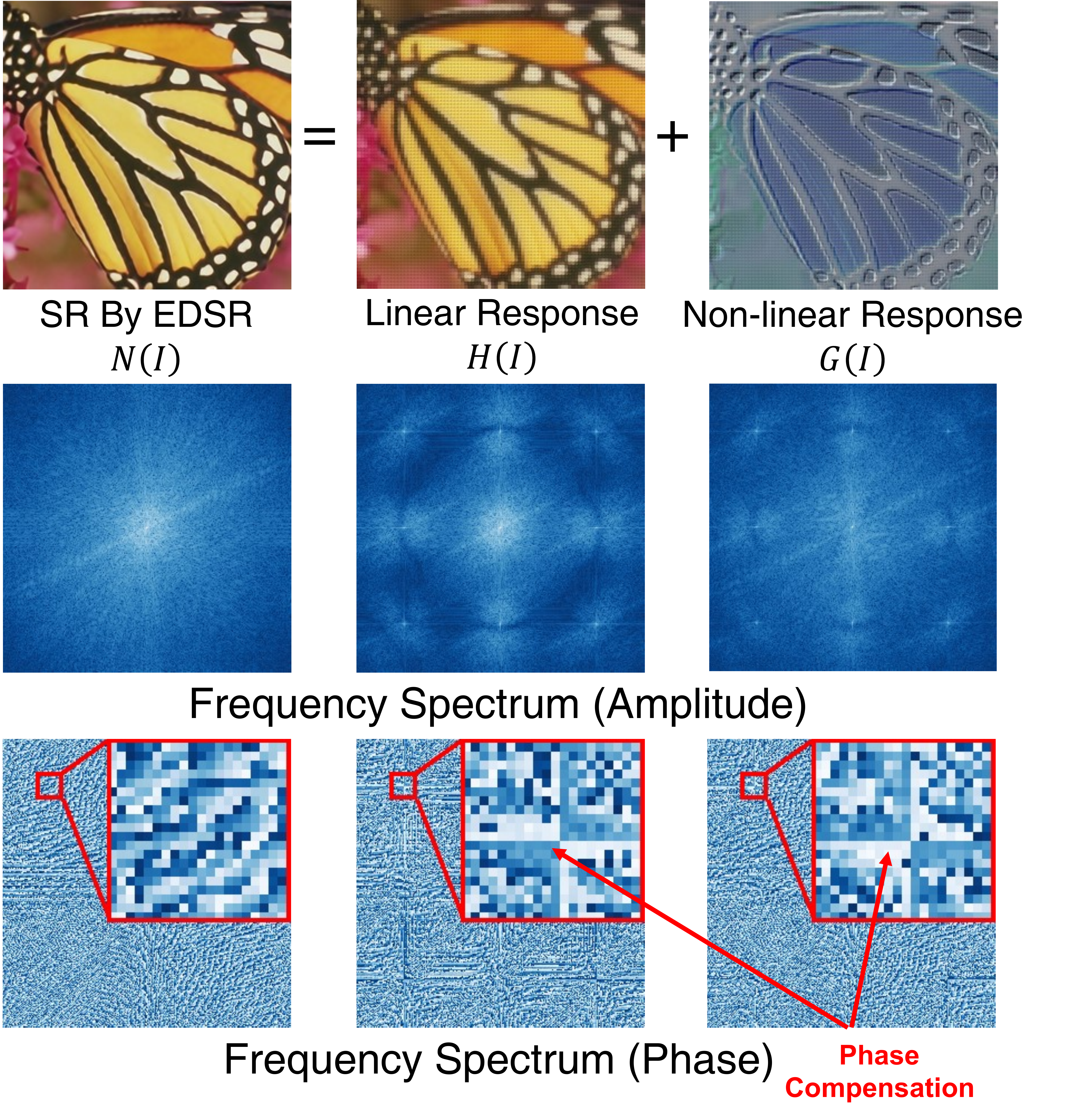}
    \caption{Top row: a super-resolved image by \cite{edsr} can be viewed as the summation of a linear response obtained by convolving impulse response with the input and the non-linear response gained by subtracting linear-part from the ISR result. Second row: the corresponding frequency spectrum amplitude of the top row. Third row: the corresponding frequency spectrum phase of the top row. The phase compensation indicates that the non-linear part is compensating distortion.}
    \label{fig:lin_nonline_resp}
\end{figure}
In \Cref{sec:preliminaries}, we mention that a simple low-pass filter achieves ISR functionality. Do neural networks possess low-pass filters internally? If this hypothesis is valid, according to the principle of HyRA, when we input a Dirac $\delta$ signal into the neural network $N(I)$, the output should be the impulse response of the low-pass filter, i.e., the sinc function (please recall \Cref{sec:system_and_response} and \Cref{tab:ft-pair}).
In the experiment section (\Cref{sec:sinc figure}), we conduct tests on three mainstream ISR backbones and some derived methods. We find that their impulse responses are sinc functions\footnote{Strictly speaking, it is a windowed sinc function. Regarding the windowing operation, please refer to \Cref{sec:app_windowing}}. Now, with both the impulse response and spatial invariance property, we can compute the response of the linear system $H(I)$ to any input through convolution: 
\begin{equation}
        \begin{aligned}
            H(I)_{x,y} &= I_{x,y} * H(\delta)\\ 
            &=\iint_{(\tau,u)\in\mathbb{R}^2} I_{\tau,u}H(\delta)_{x-\tau,y-u}\mathrm{d}\tau\mathrm{d}u.
        \end{aligned}
\end{equation}

In a practical scenario, when dealing with a two-dimensional impulse array represented by $I$, the integration process can be effectively substituted with summation, incorporating appropriate padding. Despite the convolution operator in PyTorch \citep{Paszke2019PyTorchAI} being inherently a correlation operator, the symmetric nature of the sinc function allows for its seamless utilization within such an operator. We present a toy example in \Cref{fig:lin_nonline_resp} in which we compute the response of the linear component of the EDSR network \citep{edsr} during ISR. Observing the experimental results, we notice that the linear function $H(I)$ essentially achieves super-resolution, but there are some issues: edge blurring and the presence of grid-like distortions. 
 
 The edge is blurred because the low-pass filter removes some high-frequency details. In the frequency spectrum, it is manifested as a relatively small range of diffusion of the central bright spot towards the surroundings. This implies that the image has more low-frequency components and fewer high-frequency components. Such an outcome is the inevitable consequence of applying the low-pass filter.
 
 When computing the response of the linear system, we first perform zero-interpolation on the low-resolution image to achieve the target spatial size. This operation leads to periodic extension in the frequency spectrum\footnote{Please refer to \Cref{sec:freq_spec_reduplicate} in the Appendix for details about the periodic extension.}. Since this low-pass filter is not a complete ideal filter, but an ideal filter truncated by a certain window function, its filtering performance is weakened by the window function. The weakened filter cannot completely eliminate the extended spectrum, meaning the attenuation in the stopband is insufficient, as referred to in signal processing, thus causing such gird-like distortions.
 
In summary, the linear system $H(I)$ (the low-pass filter approximated by the neural network) can achieve super-resolution functionality, but it is not perfect. On one hand, the low-pass filter determines that the image is blurred, lacking high frequencies. On the other hand, the filter is windowed, leading to a weakened filtering performance and resulting in grid-like distortions. These issues will be compensated for by the nonlinear system $G(I)$.

\subsubsection{$G(I)$ Injects high-frequency information\label{sec:GIinject}}
\begin{figure}
    \centering
    \includegraphics[scale=0.095]{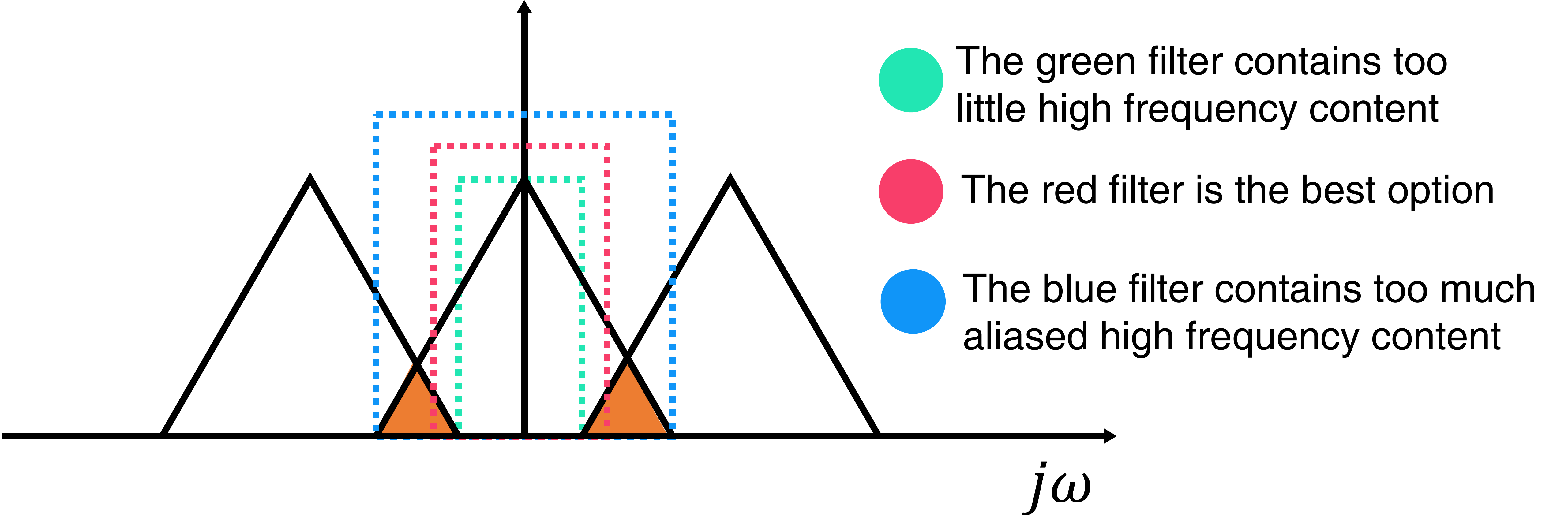}
    \caption{An illustration of how the passband width of a low-pass filter affects its ISR results. A too wide passband or a too narrow passband can result in a decline in performance.}
    \label{fig:filters}
\end{figure}

%

Though a low-pass filter can achieve ISR (please refer to \Cref{sec:preliminaries}), its performance can never surpass a well-trained neural network. The outcome of a low-pass filter varies with respect to the passband width, as depicted in \Cref{fig:filters}. However, information outside the passband will be completely wiped out, causing an observable detail loss in high-frequency components. On the contrary, the non-linear part of neural networks is able to inject information in high-frequency domain based on learned or structural priors. Moreover, it can compensate the grid-like distortions brought by the windowed low-pass filter. Together with the linear part, neural networks function as the superset of low-pass filter, retaining both high and low frequency information. 
\par
We compute the non-linear response and its frequency spectrum of the neural network using the proposed HyRA paradigm. In the toy example presented in \Cref{fig:lin_nonline_resp}, it can be noticed that the response of the non-linear component exhibits sharper edges. Compared with the frequency spectrum of the ISR results, the central bright spot in the response of $G(I)$ spreads to a larger range, indicating that more power is distributed into the high-frequency domain. Almost all the components of the high-frequency part in the final ISR result are contributed by the non-linear component.
\par
As mentioned in \Cref{sec:hi_is_lpfing}, the non-linear component also plays a crucial role in compensating for the distortion introduced by $H(I)$. Examining the response of $G(I)$, we note that it also exhibits grid-like distortions, matching those in the response of $H(I)$. This allows for the cancellation of the grid-like distortions, achieving the final goal of ISR. As shown in \Cref{fig:lin_nonline_resp}, upon observing the frequency spectrum, bright spots corresponding to the amplitude spectrum of $H(I)$ exist in all four corners of the amplitude spectrum of $G(I)$. However, the phase spectrum of $G(I)$ is in compensation of the phase spectrum of $H(I)$, indicating that the grid-like distortion is `erased' here.

In summary, the non-linear component $G(I)$ serves to inject high-frequency details learned during training to compensate for the loss of high frequencies introduced by the low-pass filter. Simultaneously, it addresses distortions arising from the imperfect performance of the low-pass filter.

\subsection{Frequency Spectrum Distribution Similarity (FSDS)\label{sec:fsds}}
In this section, we introduce the FSDS metric to quantitatively describe the so called the `injected high frequencies' as discussed in \Cref{sec:GIinject}.

\subsubsection{Motivation and Method}
\begin{figure}[h]
    \centering
    \includegraphics[scale=0.15]{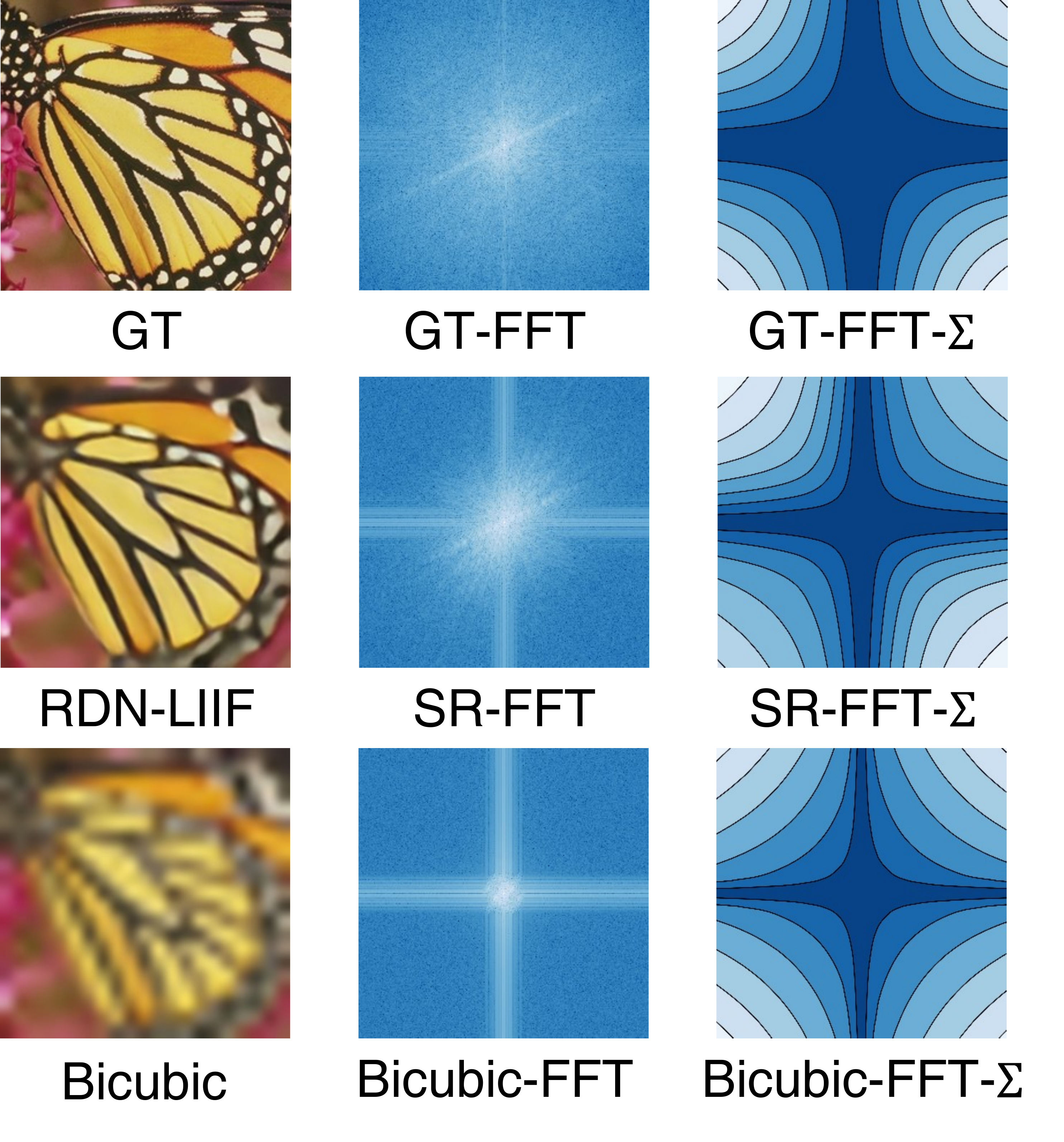}
    \caption{X-FFT-$\Sigma$ denotes the integrated frequency spectrum, the integration path is from origin to infinty in every quadrant. Columns 1 and 2 in the figure respectively show that the differences in the results of different ISR methods can be reflected in the frequency spectrum. Column 3 presents the integral of the spectrum from low to high frequencies in a contour plot. The distribution of contours visually represents the distinct distribution of different frequency components.
    }
    \label{fig:fsds-visualization}
\end{figure}
Since we need to measure the components of injected high frequencies, we must delve into the issue from a frequency spectrum perspective. However, commonly used metrics such as PSNR, SSIM \citep{SSIM}, and LPIPS \citep{LPIPS} do not measure the quality of an image from a spectral perspective.

Additionally, we've noted that the frequency domain distribution in the ISR field can significantly impact downstream applications \citep{Yu2023, Xu_2020_CVPR}. Consequently, we propose that evaluating the ISR effectiveness of a network requires a thorough assessment of its performance in the frequency spectrum. This involves examining the similarity in frequency spectrum between the low-resolution image and the high-resolution image. The Frequency Spectrum Distribution Similarity (FSDS) metric integrates the power distribution maps of the spectrum for both images. The difference is then calculated to generate an error map, and the total sum of its absolute values is computed.
\par
For an image $I^{\mathrm{HR}}_{x,y}$, to minimize the impact of the data input range on the results, we normalize the input data and then perform a two-dimensional Fourier transform to obtain $I^\mathrm{HR}_{j\omega_1, j\omega_2}$, which can be mathematically described as:
\begin{equation}
    I^\mathrm{HR}_{j\omega_1, j\omega_2}=\mathcal{F}\left[\frac{I^\mathrm{HR}-E(I^\mathrm{HR})}{\sigma(I^\mathrm{HR})}\right],
\end{equation}
where $E(I^\mathrm{HR})$ and $\sigma(I^\mathrm{HR})$ are the mean value and variance of $I^\mathrm{HR}$ respectively. Similarly, we perform a Fourier transform on the ISR image to obtain $I^\mathrm{SR}_{j\omega_1,j\omega_2}$. It is worth noting that unlike other metrics, such as PSNR and SSIM\cite{SSIM}, which do not incorporate normalization, FSDS is specifically designed to accentuate numerical variations due to its emphasis on numerical changes rather than absolute numerical values. Then, the complex integration of the two spectrum is performed, providing the power distribution map $D^{\mathrm{HR}}$, which is defined as:
\begin{equation}
    D^{\mathrm{HR}} = \iint_{(\omega_1, \omega_2)\in \mathbb{R}^2}I^\mathrm{HR}\mathrm{d}\omega_1\mathrm{d}\omega_2.
\end{equation}
Similarly, we can obtain $D^{SR}$. Subsequently, the difference between $D^{HR}$ and $D^{SR}$ is calculated, providing a difference map $D^{\mathrm{diff}}$ of their power distribution:
\begin{equation}
    D^{\mathrm{diff}}=D^\mathrm{HR}-D^\mathrm{SR}.
\end{equation}
Finally, we define the frequency spectrum distribution similarity (FSDS) as:
\begin{equation}
    \operatorname{FSDS}=-10\log_{10}\frac{\iint_{(\omega_1, \omega_2)\in \mathbb{R}^2}|D^\mathrm{diff}|^2\mathrm{d}\omega_1\mathrm{d}\omega_2}{\iint_{(\omega_1, \omega_2)\in \mathbb{R}^2}|D^\mathrm{HR}|^2\mathrm{d}\omega_1\mathrm{d}\omega_2} ,
    \label{eq:fsds}
\end{equation}
where $|\cdot|$ represents taking the magnitude of a complex number. Considering a more concise description of a larger dynamic range, logarithm is taken. A larger FSDS value indicates that the two images are closer, thereby suggesting better ISR results.

\subsubsection{The merits of FSDS}

\begin{figure*}
    \centering
    \includegraphics[scale=0.12]{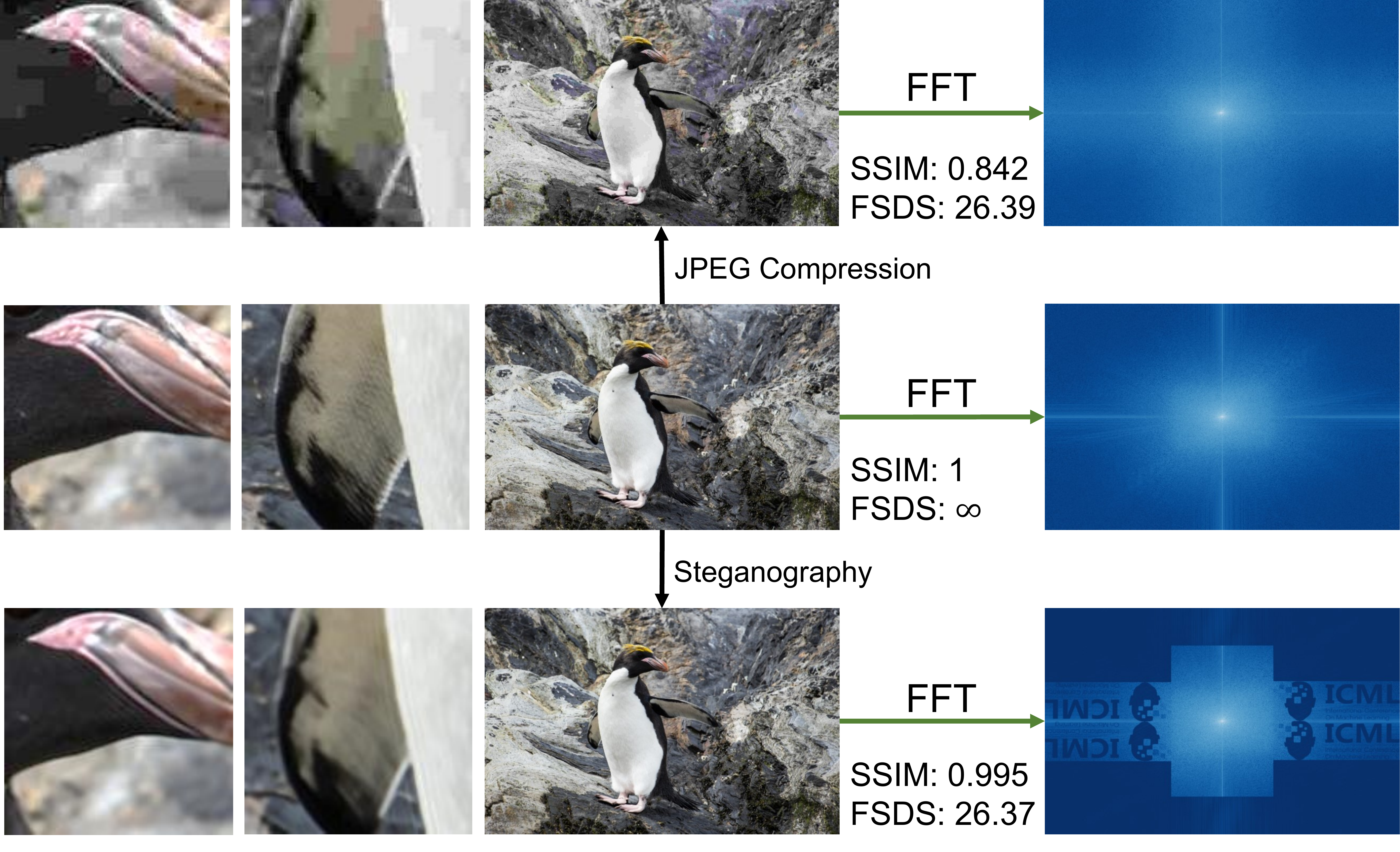}
    \caption{A comparison of SSIM and FSDS in JPEG compression and steganography. As can be seen, SSIM fails to reflect distortion brought by steganography, while FSDS captures both cases of distortion.}
    \label{fig:ssim_bug}
\end{figure*}

Previous image evaluation metrics, such as PSNR, SSIM \citep{SSIM}, have focused on statistical or structural features of images, but no work has evaluated images from the perspective of their frequency spectrum. {The spectrum is the concentrated expression of components with different changing rates in a signal or image. It is crucial for capturing details, eliminating noise, and comprehensively understanding image features. In image processing, spectrum analysis provides a more accurate evaluation, particularly playing a key role in applications sensitive to details. Due to the nature of Fourier transformation, which involves every pixel of the image in the computation, it encompasses not only information such as signal-to-noise ratio and structural similarity but also the overall similarity of the entire image. 
Therefore, evaluating image quality from the perspective of the spectrum is highly reasonable and necessary. Our FSDS metric can reflect distribution differences by employing a paradigm of integrating first in the frequency spectrum and then comparing. In other words, FSDS not only reflects the signal-to-noise ratio captured by the PSNR metric and the structural similarity indicated by the SSIM metric, but also captures features that these two metrics cannot represent. 
In the next paragraph, we will use two toy examples to demonstrate the rationale and advantages of FSDS.

From \Cref{fig:fsds-visualization}, it can be observed that images obtained by different ISR methods have different proportions of high-frequency components (the center of the spectrum figure represents low frequency, while higher frequencies extend outward). After integration, this is reflected in the varying widths of the dark cross-shaped patterns in the center. A narrower width indicates a higher proportion of low-frequency components in the spectrum, and vice versa. Existing methods may not effectively capture the loss of high-frequency components with low power in the frequency spectrum. Performing information steganography in the frequency spectrum can effectively highlight this aspect. As shown in \Cref{fig:ssim_bug}, we embed some content in the frequency spectrum of the image. Such steganography causes our FSDS metric to drop to 26.37dB while the SSIM metric remains in a high level of 0.995. 
we can observe that after applying specific steganography to the spectrum of an image, the image exhibits some blurring and oscillation. Such oscillations are actually the Gibbs phenomenon, a typical oscillation phenomenon caused by the loss of high-frequency information.
Meanwhile, when we apply JEPG compression to the image\footnote{In this example, the compression quality is set to 10.}, when FSDS drops to 26.39dB, SSIM together drops to 0.842. \textbf{This toy example demonstrates that there indeed exists some feature SSIM cannot reflect while that can be reflected by FSDS}.

In summary, previous methods may not effectively reflect the situation in the image frequency spectrum, while our proposed FSDS metric can sensitively detect distortions in the frequency spectrum.

\section{Experiments}

\begin{figure}
    \centering
    \includegraphics[scale=0.25]{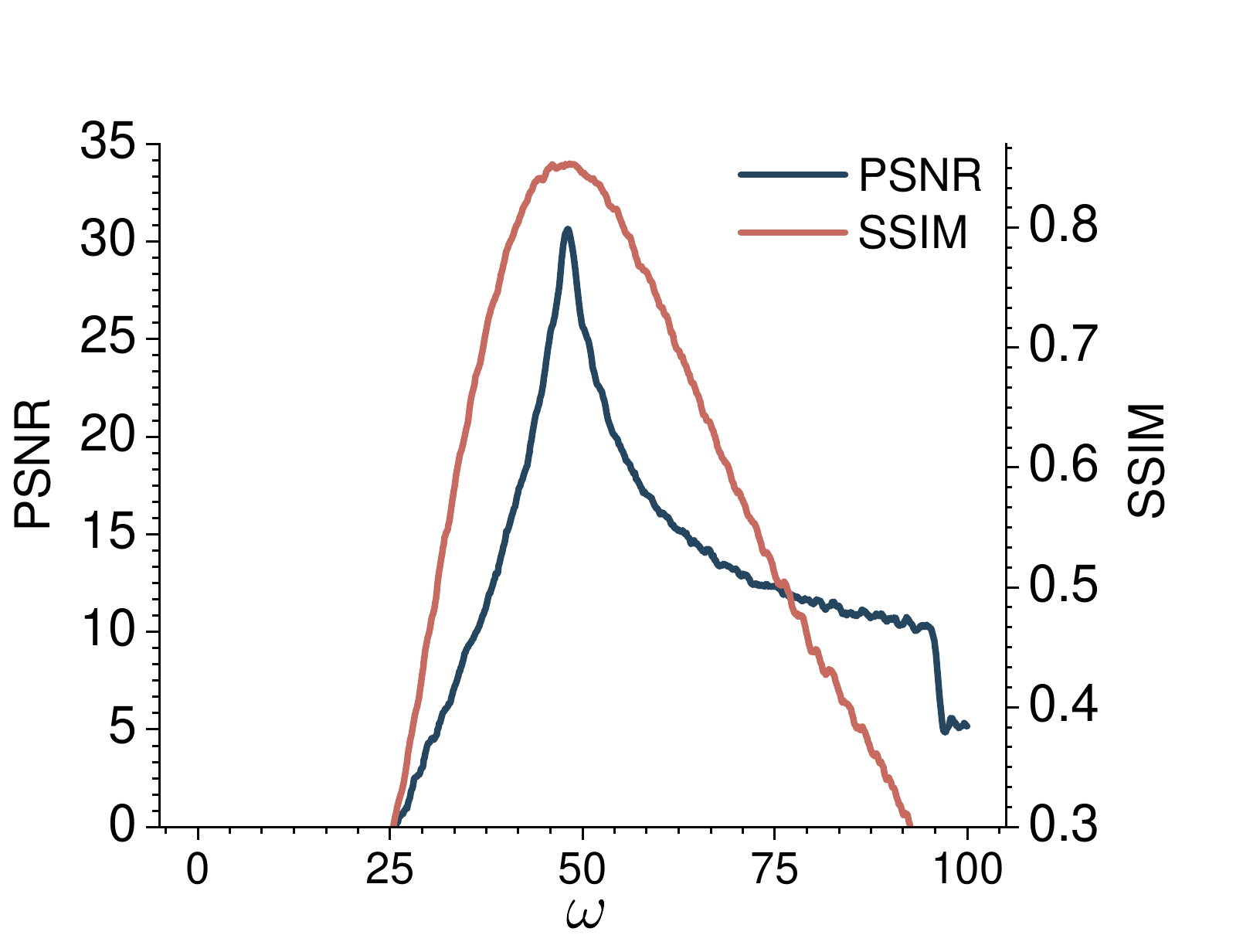}
    \caption{The ISR performance using a low-pass filter shows variations with the cutoff frequency $\omega$. This figure illustrates the results obtained from the $\times$2 ISR task conducted on the DIV2K dataset. To enhance the clarity of the visualization, the curve has been smoothed using a moving average with a window length of 10.}
    \label{fig:lpf-performance}
\end{figure}
\begin{figure*}[h!]
    \centering
    \includegraphics[scale=0.12]{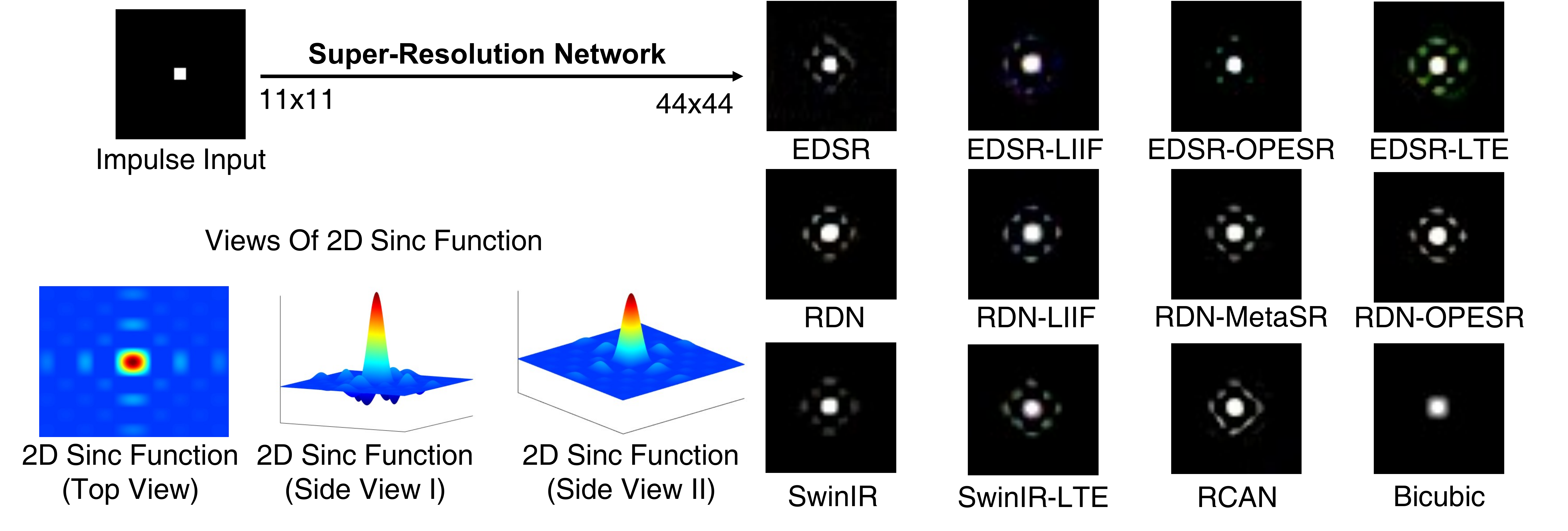}
    \caption{Comparison of impulse responses and the sinc function for several mainstream backbone networks and their derivatives. The impulse response of the bicubic interpolation result is presented as a reference.}
    \label{fig:sinc_impulse_response}
\end{figure*}
Due to the page limitation, we can only present three of the most crucial experiments in this section, namely: 1) the relationship between the low-pass filter passband width and ISR performance; 2) various network impulse responses; 3) a comparison of FSDS metrics with PSNR, SSIM and LPIPS \citep{LPIPS} metrics on the DIV2K dataset. For more experiments, please refer to \Cref{sec:extra_exp}.

\subsection{Experiment on Low-pass Filtering Super-resolution Performance\label{sec:lpf_performance}}

In \Cref{sec:GIinject}, we mention that a vanilla low-pass filter can achieve ISR, we now present an experiment on the relationship between the low-pass filter passband width and ISR performance.
As shown in \Cref{fig:lpf-performance}, we utilized various low-pass filters to perform $\times$2 ISR on the validation set from of DIV2K dataset. Subsequently, we evaluated the ISR results using the PSNR and SSIM metrics.  When $\omega=48$, PSNR reaches its maximum value of $31.40$. When $\omega=45.8$, SSIM reaches its maximum value of $0.87$. We assert that, in terms of neural network performance, for $\times$2 ISR, the PSNR should not fall below $31.40$, and the SSIM should not be lower than $0.87$. Otherwise, it can be considered that the neural network may not effectively capture both low-frequency and high-frequency information.

\subsection{Experiment on Impulse Response\label{sec:sinc figure}}

We select several mainstream backbones and their derivatives commonly used for the ISR task  \citep{edsr, liif, OPE-SR, SRNO, rdn, metasr, swinir, LTE, RCAN} and conduct impulse response tests. The experimental results are compared with the sinc function and depicted in \Cref{fig:sinc_impulse_response}. The input image is an $11\times 11$ image where only the pixel at position $(5,5)$ is white (the values for all three channels at this position are 255, with indices starting from 0), and the rest of the image is black (with values of 0). According to \Cref{tab:ft-pair}, it can be observed that as the ISR factor increases, the central peak of the output sinc function becomes wider and more pronounced. To balance visual saliency and the maximum ISR factor achievable by certain networks, we opted for a 4x ISR factor. Observing the experimental results, we can notice that regardless of the neural network structure used for ISR, whether it's a CNN or a transformer, the impulse response exhibits some degree of similarity to the two-dimensional sinc function. This similarity is particularly pronounced in networks like RDN  \citep{rdn} and RCAN  \citep{RCAN}. Despite some distortion in comparison to the sinc function, EDSR  \citep{edsr}, EQSR  \citep{EQSR}, and their derivatives still exhibit significant features of the sinc function, including the central bright spot and elongated bright patches in the cardinal directions. From \Cref{tab:metrics}, we observe that networks exhibiting superior performance tend to generate impulse responses that closely resemble the sinc function. \textbf{
This observation suggests that preserving low-frequency information more effectively can also enhance performance. However, few previous works has focus on low-frequency, giving us a new idea for furture ISR networks.}

\subsection{Experiment on FSDS Metric\label{sec:fsds_metric}}
\begin{table*}[h!]
    \Large
    \centering
    \renewcommand{\arraystretch}{1.2}
    \resizebox{\textwidth}{!}{
    \begin{tabular}{lcccccccccccccccccccccc}
    \toprule 
         \multirow{3}*{Method}&\multicolumn{5}{c}{{PSNR}}&\multicolumn{5}{c}{{SSIM}}&\multicolumn{5}{c}{{LPIPS}}&\multicolumn{5}{c}{{FSDS (Ours)}}\\
         \cmidrule(l{2pt}r{2pt}){2-6}\cmidrule(l{2pt}r{2pt}){7-11}\cmidrule(l{2pt}r{2pt}){12-16}\cmidrule(l{2pt}r{2pt}){17-21}&{$\times 2$}&{$\times 3$}&{$\times 4$}&{$\times 6$}&{$\times 12$}&{$\times 2$}&{$\times 3$}&{$\times 4$}&{$\times 6$}&{$\times 12$}&{$\times 2$}&{$\times 3$}&{$\times 4$}&{$\times 6$}&{$\times 12$}&{$\times 2$}&{$\times 3$}&{$\times 4$}&{$\times 6$}&{$\times 12$}\\
         \midrule
         EDSR\citep{edsr}&34.63\textcolor{gray}{\textsuperscript{12}}&30.95\textcolor{gray}{\textsuperscript{14}}&28.87\textcolor{gray}{\textsuperscript{16}}&-&-&0.937\textcolor{gray}{\textsuperscript{12}}&0.874\textcolor{gray}{\textsuperscript{13}}&0.816\textcolor{gray}{\textsuperscript{16}}&-&-&0.042\textcolor{gray}{\textsuperscript{12}}&0.101\textcolor{gray}{\textsuperscript{14}}&0.155\textcolor{gray}{\textsuperscript{16}}&-&-&39.21\textcolor{gray}{\textsuperscript{15}}&34.15\textcolor{gray}{\textsuperscript{10}}&31.38\textcolor{gray}{\textsuperscript{7}}&-&-\\
EDSR-LIIF\citep{edsr}&34.55\textcolor{gray}{\textsuperscript{14}}&30.92\textcolor{gray}{\textsuperscript{15}}&28.98\textcolor{gray}{\textsuperscript{15}}&26.76\textcolor{gray}{\textsuperscript{4}}&23.75\textcolor{gray}{\textsuperscript{4}}&0.937\textcolor{gray}{\textsuperscript{15}}&0.874\textcolor{gray}{\textsuperscript{14}}&0.819\textcolor{gray}{\textsuperscript{14}}&0.741\textcolor{gray}{\textsuperscript{4}}&0.633\textcolor{gray}{\textsuperscript{4}}&0.043\textcolor{gray}{\textsuperscript{15}}&0.100\textcolor{gray}{\textsuperscript{13}}&0.153\textcolor{gray}{\textsuperscript{13}}&0.243\textcolor{gray}{\textsuperscript{4}}&\textcolor{blue}{0.428}\textcolor{gray}{\textsuperscript{2}}&39.37\textcolor{gray}{\textsuperscript{13}}&34.53\textcolor{gray}{\textsuperscript{6}}&31.32\textcolor{gray}{\textsuperscript{9}}&28.45\textcolor{gray}{\textsuperscript{4}}&23.15\textcolor{gray}{\textsuperscript{3}}\\
EDSR-OPESR\citep{edsr}&34.34\textcolor{gray}{\textsuperscript{16}}&30.96\textcolor{gray}{\textsuperscript{12}}&29.04\textcolor{gray}{\textsuperscript{12}}&-&-&0.936\textcolor{gray}{\textsuperscript{16}}&0.875\textcolor{gray}{\textsuperscript{11}}&0.820\textcolor{gray}{\textsuperscript{13}}&-&-&0.043\textcolor{gray}{\textsuperscript{13}}&0.100\textcolor{gray}{\textsuperscript{11}}&0.153\textcolor{gray}{\textsuperscript{14}}&-&-&39.80\textcolor{gray}{\textsuperscript{6}}&34.63\textcolor{gray}{\textsuperscript{5}}&31.29\textcolor{gray}{\textsuperscript{11}}&-&-\\
EDSR-SRNO\citep{edsr}&34.72\textcolor{gray}{\textsuperscript{9}}&31.05\textcolor{gray}{\textsuperscript{10}}&29.13\textcolor{gray}{\textsuperscript{10}}&26.90\textcolor{gray}{\textsuperscript{3}}&23.87\textcolor{gray}{\textsuperscript{3}}&0.939\textcolor{gray}{\textsuperscript{9}}&0.876\textcolor{gray}{\textsuperscript{10}}&0.822\textcolor{gray}{\textsuperscript{11}}&0.746\textcolor{gray}{\textsuperscript{3}}&0.638\textcolor{gray}{\textsuperscript{3}}&0.041\textcolor{gray}{\textsuperscript{8}}&0.098\textcolor{gray}{\textsuperscript{10}}&0.149\textcolor{gray}{\textsuperscript{10}}&0.241\textcolor{gray}{\textsuperscript{3}}&0.437\textcolor{gray}{\textsuperscript{4}}&39.53\textcolor{gray}{\textsuperscript{11}}&34.53\textcolor{gray}{\textsuperscript{7}}&31.45\textcolor{gray}{\textsuperscript{6}}&28.46\textcolor{gray}{\textsuperscript{3}}&22.78\textcolor{gray}{\textsuperscript{4}}\\
EDSR-LTE\citep{edsr}&34.61\textcolor{gray}{\textsuperscript{13}}&30.97\textcolor{gray}{\textsuperscript{11}}&29.03\textcolor{gray}{\textsuperscript{14}}&-&-&0.937\textcolor{gray}{\textsuperscript{14}}&0.874\textcolor{gray}{\textsuperscript{12}}&0.820\textcolor{gray}{\textsuperscript{12}}&-&-&0.043\textcolor{gray}{\textsuperscript{14}}&0.100\textcolor{gray}{\textsuperscript{12}}&0.152\textcolor{gray}{\textsuperscript{12}}&-&-&39.29\textcolor{gray}{\textsuperscript{14}}&34.33\textcolor{gray}{\textsuperscript{8}}&31.30\textcolor{gray}{\textsuperscript{10}}&-&-\\
RDN\citep{rdn}&34.69\textcolor{gray}{\textsuperscript{10}}&30.58\textcolor{gray}{\textsuperscript{16}}&29.12\textcolor{gray}{\textsuperscript{11}}&-&-&0.938\textcolor{gray}{\textsuperscript{10}}&0.867\textcolor{gray}{\textsuperscript{16}}&0.823\textcolor{gray}{\textsuperscript{10}}&-&-&0.041\textcolor{gray}{\textsuperscript{9}}&0.106\textcolor{gray}{\textsuperscript{16}}&0.150\textcolor{gray}{\textsuperscript{11}}&-&-&40.02\textcolor{gray}{\textsuperscript{3}}&32.95\textcolor{gray}{\textsuperscript{16}}&31.65\textcolor{gray}{\textsuperscript{4}}&-&-\\
RDN-LIIF\citep{rdn}&34.86\textcolor{gray}{\textsuperscript{8}}&31.21\textcolor{gray}{\textsuperscript{8}}&29.26\textcolor{gray}{\textsuperscript{9}}&\textcolor{blue}{26.99}\textcolor{gray}{\textsuperscript{2}}&\textcolor{blue}{23.93}\textcolor{gray}{\textsuperscript{2}}&0.939\textcolor{gray}{\textsuperscript{8}}&0.879\textcolor{gray}{\textsuperscript{8}}&0.826\textcolor{gray}{\textsuperscript{9}}&\textcolor{blue}{0.749}\textcolor{gray}{\textsuperscript{2}}&\textcolor{blue}{0.639}\textcolor{gray}{\textsuperscript{2}}&0.041\textcolor{gray}{\textsuperscript{10}}&0.096\textcolor{gray}{\textsuperscript{8}}&0.147\textcolor{gray}{\textsuperscript{8}}&\textcolor{red}{0.231}\textcolor{gray}{\textsuperscript{1}}&\textcolor{red}{0.406}\textcolor{gray}{\textsuperscript{1}}&39.69\textcolor{gray}{\textsuperscript{9}}&34.83\textcolor{gray}{\textsuperscript{3}}&31.83\textcolor{gray}{\textsuperscript{3}}&\textcolor{red}{28.89}\textcolor{gray}{\textsuperscript{1}}&\textcolor{red}{23.78}\textcolor{gray}{\textsuperscript{1}}\\
RDN-OPESR\citep{rdn}&34.52\textcolor{gray}{\textsuperscript{15}}&31.19\textcolor{gray}{\textsuperscript{9}}&29.28\textcolor{gray}{\textsuperscript{8}}&-&-&0.938\textcolor{gray}{\textsuperscript{11}}&0.879\textcolor{gray}{\textsuperscript{9}}&0.826\textcolor{gray}{\textsuperscript{8}}&-&-&0.042\textcolor{gray}{\textsuperscript{11}}&0.096\textcolor{gray}{\textsuperscript{7}}&0.148\textcolor{gray}{\textsuperscript{9}}&-&-&\textcolor{blue}{40.19}\textcolor{gray}{\textsuperscript{2}}&\textcolor{blue}{34.96}\textcolor{gray}{\textsuperscript{2}}&31.48\textcolor{gray}{\textsuperscript{5}}&-&-\\
RDN-LTE\citep{rdn}&34.91\textcolor{gray}{\textsuperscript{7}}&31.26\textcolor{gray}{\textsuperscript{7}}&29.31\textcolor{gray}{\textsuperscript{7}}&\textcolor{red}{27.05}\textcolor{gray}{\textsuperscript{1}}&\textcolor{red}{23.99}\textcolor{gray}{\textsuperscript{1}}&0.939\textcolor{gray}{\textsuperscript{7}}&0.879\textcolor{gray}{\textsuperscript{7}}&0.827\textcolor{gray}{\textsuperscript{7}}&\textcolor{red}{0.750}\textcolor{gray}{\textsuperscript{1}}&\textcolor{red}{0.641}\textcolor{gray}{\textsuperscript{1}}&0.041\textcolor{gray}{\textsuperscript{7}}&0.095\textcolor{gray}{\textsuperscript{6}}&0.144\textcolor{gray}{\textsuperscript{6}}&\textcolor{blue}{0.233}\textcolor{gray}{\textsuperscript{2}}&0.431\textcolor{gray}{\textsuperscript{3}}&39.82\textcolor{gray}{\textsuperscript{5}}&34.75\textcolor{gray}{\textsuperscript{4}}&\textcolor{blue}{31.84}\textcolor{gray}{\textsuperscript{2}}&\textcolor{blue}{28.65}\textcolor{gray}{\textsuperscript{2}}&\textcolor{blue}{23.35}\textcolor{gray}{\textsuperscript{2}}\\
SwinIR-classical\citep{swinir}&35.34\textcolor{gray}{\textsuperscript{5}}&31.64\textcolor{gray}{\textsuperscript{5}}&29.63\textcolor{gray}{\textsuperscript{4}}&-&-&0.943\textcolor{gray}{\textsuperscript{5}}&0.885\textcolor{gray}{\textsuperscript{5}}&0.835\textcolor{gray}{\textsuperscript{5}}&-&-&0.038\textcolor{gray}{\textsuperscript{5}}&0.092\textcolor{gray}{\textsuperscript{4}}&0.140\textcolor{gray}{\textsuperscript{5}}&-&-&\textcolor{red}{40.37}\textcolor{gray}{\textsuperscript{1}}&\textcolor{red}{35.13}\textcolor{gray}{\textsuperscript{1}}&\textcolor{red}{32.37}\textcolor{gray}{\textsuperscript{1}}&-&-\\
ITSRN\citep{ITSRN}&32.67\textcolor{gray}{\textsuperscript{17}}&30.49\textcolor{gray}{\textsuperscript{17}}&28.73\textcolor{gray}{\textsuperscript{17}}&26.64\textcolor{gray}{\textsuperscript{5}}&23.72\textcolor{gray}{\textsuperscript{5}}&0.922\textcolor{gray}{\textsuperscript{17}}&0.866\textcolor{gray}{\textsuperscript{17}}&0.813\textcolor{gray}{\textsuperscript{17}}&0.736\textcolor{gray}{\textsuperscript{5}}&0.630\textcolor{gray}{\textsuperscript{5}}&0.052\textcolor{gray}{\textsuperscript{17}}&0.113\textcolor{gray}{\textsuperscript{17}}&0.167\textcolor{gray}{\textsuperscript{17}}&0.271\textcolor{gray}{\textsuperscript{5}}&0.469\textcolor{gray}{\textsuperscript{5}}&31.25\textcolor{gray}{\textsuperscript{18}}&26.18\textcolor{gray}{\textsuperscript{18}}&25.88\textcolor{gray}{\textsuperscript{18}}&25.62\textcolor{gray}{\textsuperscript{5}}&21.57\textcolor{gray}{\textsuperscript{5}}\\

HAT-S\citep{hat}&\textcolor{blue}{35.46}\textcolor{gray}{\textsuperscript{2}}&31.72\textcolor{gray}{\textsuperscript{3}}&29.72\textcolor{gray}{\textsuperscript{3}}&-&-&\textcolor{blue}{0.944}\textcolor{gray}{\textsuperscript{2}}&0.887\textcolor{gray}{\textsuperscript{3}}&0.837\textcolor{gray}{\textsuperscript{3}}&-&-&\textcolor{blue}{0.038}\textcolor{gray}{\textsuperscript{2}}&0.092\textcolor{gray}{\textsuperscript{3}}&0.139\textcolor{gray}{\textsuperscript{3}}&-&-&39.78\textcolor{gray}{\textsuperscript{7}}&33.80\textcolor{gray}{\textsuperscript{13}}&31.06\textcolor{gray}{\textsuperscript{14}}&-&-\\
HAT\citep{hat}&\textcolor{blue}{35.46}\textcolor{gray}{\textsuperscript{2}}&\textcolor{blue}{31.77}\textcolor{gray}{\textsuperscript{2}}&\textcolor{blue}{29.75}\textcolor{gray}{\textsuperscript{2}}&-&-&\textcolor{blue}{0.944}\textcolor{gray}{\textsuperscript{2}}&\textcolor{blue}{0.887}\textcolor{gray}{\textsuperscript{2}}&\textcolor{blue}{0.837}\textcolor{gray}{\textsuperscript{2}}&-&-&\textcolor{blue}{0.038}\textcolor{gray}{\textsuperscript{2}}&\textcolor{blue}{0.090}\textcolor{gray}{\textsuperscript{2}}&\textcolor{blue}{0.138}\textcolor{gray}{\textsuperscript{2}}&-&-&39.78\textcolor{gray}{\textsuperscript{7}}&33.91\textcolor{gray}{\textsuperscript{11}}&31.20\textcolor{gray}{\textsuperscript{12}}&-&-\\
HDSRNet\citep{hdsr}&34.64\textcolor{gray}{\textsuperscript{11}}&30.95\textcolor{gray}{\textsuperscript{13}}&29.04\textcolor{gray}{\textsuperscript{13}}&-&-&0.937\textcolor{gray}{\textsuperscript{13}}&0.873\textcolor{gray}{\textsuperscript{15}}&0.819\textcolor{gray}{\textsuperscript{15}}&-&-&0.043\textcolor{gray}{\textsuperscript{16}}&0.103\textcolor{gray}{\textsuperscript{15}}&0.154\textcolor{gray}{\textsuperscript{15}}&-&-&39.46\textcolor{gray}{\textsuperscript{12}}&34.20\textcolor{gray}{\textsuperscript{9}}&31.33\textcolor{gray}{\textsuperscript{8}}&-&-\\
GRLBase\citep{grl}&\textcolor{red}{35.66}\textcolor{gray}{\textsuperscript{1}}&\textcolor{red}{31.93}\textcolor{gray}{\textsuperscript{1}}&\textcolor{red}{29.91}\textcolor{gray}{\textsuperscript{1}}&-&-&\textcolor{red}{0.945}\textcolor{gray}{\textsuperscript{1}}&\textcolor{red}{0.889}\textcolor{gray}{\textsuperscript{1}}&\textcolor{red}{0.841}\textcolor{gray}{\textsuperscript{1}}&-&-&\textcolor{red}{0.037}\textcolor{gray}{\textsuperscript{1}}&\textcolor{red}{0.089}\textcolor{gray}{\textsuperscript{1}}&\textcolor{red}{0.135}\textcolor{gray}{\textsuperscript{1}}&-&-&39.99\textcolor{gray}{\textsuperscript{4}}&33.84\textcolor{gray}{\textsuperscript{12}}&31.12\textcolor{gray}{\textsuperscript{13}}&-&-\\
GRLSmall\citep{grl}&35.39\textcolor{gray}{\textsuperscript{4}}&31.65\textcolor{gray}{\textsuperscript{4}}&29.63\textcolor{gray}{\textsuperscript{5}}&-&-&0.943\textcolor{gray}{\textsuperscript{4}}&0.886\textcolor{gray}{\textsuperscript{4}}&0.835\textcolor{gray}{\textsuperscript{4}}&-&-&0.038\textcolor{gray}{\textsuperscript{4}}&0.092\textcolor{gray}{\textsuperscript{5}}&0.140\textcolor{gray}{\textsuperscript{4}}&-&-&39.54\textcolor{gray}{\textsuperscript{10}}&33.68\textcolor{gray}{\textsuperscript{14}}&31.03\textcolor{gray}{\textsuperscript{15}}&-&-\\
GRLTiny\citep{grl}&35.17\textcolor{gray}{\textsuperscript{6}}&31.41\textcolor{gray}{\textsuperscript{6}}&29.40\textcolor{gray}{\textsuperscript{6}}&-&-&0.942\textcolor{gray}{\textsuperscript{6}}&0.882\textcolor{gray}{\textsuperscript{6}}&0.830\textcolor{gray}{\textsuperscript{6}}&-&-&0.039\textcolor{gray}{\textsuperscript{6}}&0.096\textcolor{gray}{\textsuperscript{9}}&0.146\textcolor{gray}{\textsuperscript{7}}&-&-&39.20\textcolor{gray}{\textsuperscript{16}}&33.20\textcolor{gray}{\textsuperscript{15}}&30.56\textcolor{gray}{\textsuperscript{16}}&-&-\\
Bicubic&31.04\textcolor{gray}{\textsuperscript{18}}&28.25\textcolor{gray}{\textsuperscript{18}}&26.69\textcolor{gray}{\textsuperscript{18}}&24.87\textcolor{gray}{\textsuperscript{6}}&22.34\textcolor{gray}{\textsuperscript{6}}&0.893\textcolor{gray}{\textsuperscript{18}}&0.813\textcolor{gray}{\textsuperscript{18}}&0.752\textcolor{gray}{\textsuperscript{18}}&0.675\textcolor{gray}{\textsuperscript{6}}&0.587\textcolor{gray}{\textsuperscript{6}}&0.096\textcolor{gray}{\textsuperscript{18}}&0.191\textcolor{gray}{\textsuperscript{18}}&0.291\textcolor{gray}{\textsuperscript{18}}&0.439\textcolor{gray}{\textsuperscript{6}}&0.613\textcolor{gray}{\textsuperscript{6}}&32.79\textcolor{gray}{\textsuperscript{17}}&28.90\textcolor{gray}{\textsuperscript{17}}&26.57\textcolor{gray}{\textsuperscript{17}}&23.45\textcolor{gray}{\textsuperscript{6}}&18.74\textcolor{gray}{\textsuperscript{6}}\\
    \bottomrule
    \end{tabular}}
    \caption{Comparison of PSNR, SSIM \citep{SSIM}, LPIPS \citep{LPIPS} and FSDS metrics for different methods on the DIV2K dataset \citep{DIV2K}. Items with the highest and the second-highest mean values are highlighted in red and blue, respectively. The gray superscripts are the order of each method.}
    \label{tab:metrics}
\end{table*}
We conducted tests on the validation set of the DIV2K dataset \citep{DIV2K} using several methods  \citep{edsr, liif, OPE-SR, SRNO, LTE, rdn, swinir, ITSRN}, depicted in \Cref{tab:metrics}. The evaluation metrics include PSNR, SSIM  \citep{SSIM}, LPIPS \citep{LPIPS}, and our FSDS. All tests are performed using code and weights available in open-source official repositories. For all methods, we conduct experiment for $\times$2 to $\times$4. For methods that support arbitrary-scale ISR, we test for $\times$6 and $\times$12 as well. In the $\times$2 to $\times$4 range, GRLBase\citep{grl} consistently achieves the best performance across PSNR, SSIM and LPIPS metrics, and FSDS shows that SwinIR \citep{swinir} achieves the best performance. For $\times$6 and $\times$12, RDN-LTE  \citep{LTE} exhibits the best PSNR and SSIM metrics, while RDN-LIIF performs best on LPIPS and the FSDS metric. 

From \Cref{sec:fsds}, we claim that previous metrics are not sensitive to high-frequency information, while FSDS does. This can be proven by \Cref{tab:metrics}. In the case of slight high-frequency loss, such as on scales $\times$ 2 to $\times$ 4, FSDS responds differently compared to previous metrics. In cases that suffer from severe high-frequency loss, such as on $\times$ 6 and $\times$ 12 scales, FSDS shows consistency with previous metrics. This is because when high-frequency loss is slight, previous metrics fail to reflect such high-frequency loss and while the loss becomes more severe, they start to capture such loss. This observation shows the necessity of applying the FSDS metrics to assess image quality objectively.

\subsection{Some Exceptions to Impulse Responses}
\begin{figure}[h!]
    \centering
    \includegraphics[scale=0.20]{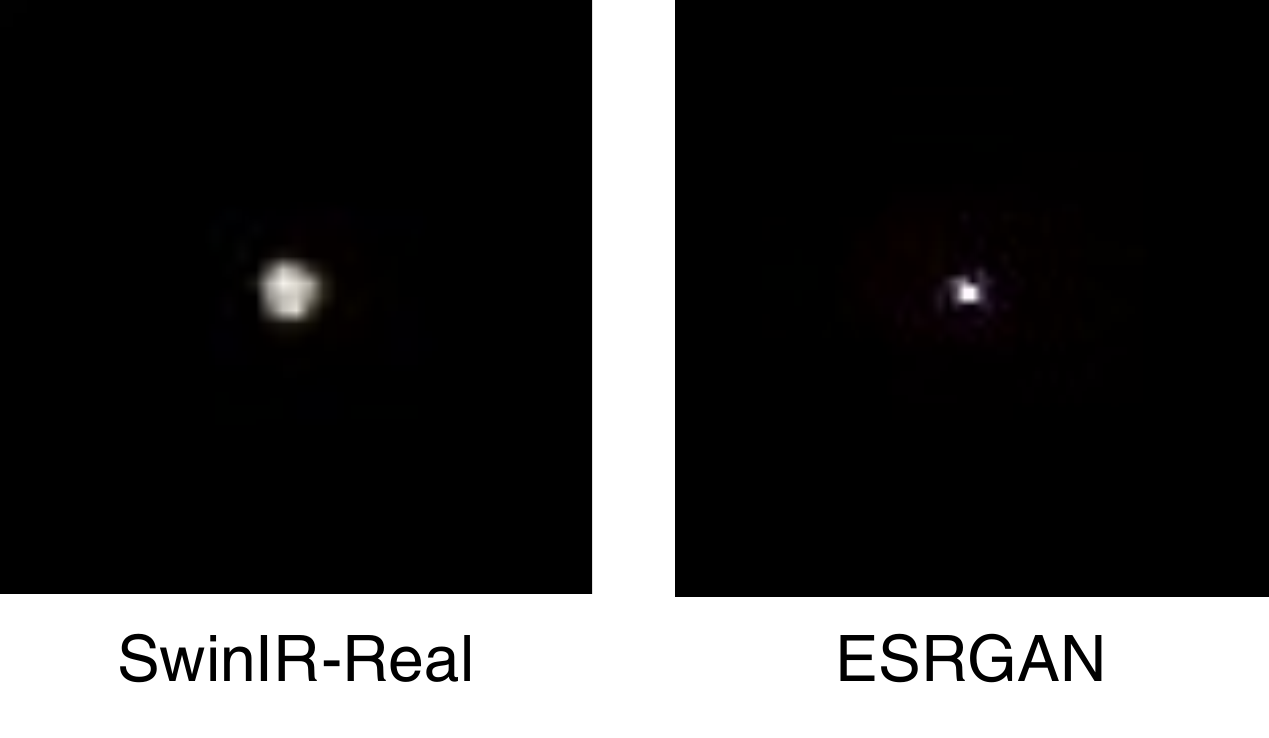}
    \caption{The impulse response of SwinIR-Real \citep{swinir} and ESRGAN \citep{ESRGAN} is not an obvious sinc function.}
    \label{fig:exceptions}
\end{figure}
We observe that not all impulse response of networks is `sinc' function, as shown in \Cref{fig:exceptions}. SwinIR-Real \citep{swinir} and ESRGAN \citep{ESRGAN} are trained using adversarial loss, while methods in \Cref{fig:sinc_impulse_response} uses loss like $\ell_1$ or $\ell_2$ loss. Therefore, we believe the `sinc' impulse response is related to the loss function.

\section{Conclusion}
In this paper, we report an intriguing observation. i.e., the sinc phenomenon, which reveals that the impulse response of ISR networks act as low-pass filters. Building on this observation, we introduce a novel approach called Hybrid Response Analysis (HyRA) to explore the hidden behavior of ISR networks. HyRA treats a neural network as a combination of a linear system and a non-linear system with a zero impulse response. The linear system functions as a low-pass filter, while the non-linear system utilizes prior knowledge to inject high-frequency details. To assess the neural network's information recovery across the frequency spectrum, we propose the Frequency Spectrum Distribution Similarity (FSDS) metric. FSDS uncovers properties overlooked by previous metrics, and experiments validate the rationality and necessity of it.
\newpage
\section*{Acknowledgements}
We appreciate anonymous reviewers for their previous suggestions to help this paper better.
Moreover, we would like to express our sincere gratitude to Ruijie Zhu (rzhu48@ucsc.edu) for his generous support in GPUs. Without his support, it is hard for us to do experiments using full-scale DIV2K dataset. This work is supported by NSFC
(12271083).
\section*{Impact Statement}
This paper presents work whose goal is to advance the interpretability of neural networks in Image super-resolution. There are many potential societal consequences of our work, none which we feel must be specifically highlighted here.

\nocite{langley00}

\bibliography{example_paper}
\bibliographystyle{icml2024}

\newpage
\appendix
\onecolumn
\appendix
\section{Notation Conventions}
\begin{table*}[h]
    \centering

    \begin{tabularx}{\textwidth}{p{0.15\textwidth}X}
        \toprule
        \underline{\textbf{\emph{Symbols}}} \\
        $j$&Imaginary number unit\\
        $*$&Convolution operator\\
        $I_{x,y}^{\mathrm{comment}}$&2-D signal with variant $x,y$\\
        $I_{j\omega_1,j\omega_2}^{\mathrm{comment}}$&Fourier transform of $I_{x,y}$\\
        $x(t)$&1-D signal with variant $t$\\
        $X(j\omega)$&Fourier transform of $x(t)$, $j\omega$ is a notation, $\omega$ is the variant\\
        $x[n]$&Discrete signal with index $n$\\
        $X[k]$&DFT of $x[n]$\\
        $\mathcal{F}[x(t)]$&Fourier transform operator, $X(j\omega)=\mathcal{F}[x(t)]$\\
        $\mathcal{F}^{-1}[X(j\omega)]$&Inverse Fourier transform, $x(t)=\mathcal{F}^{-1}[X(j\omega)]$\\
        \underline{\textbf{\emph{Signals}}} \\
        $\delta(t)$&Dirac $\delta$ function\\
        $sinc_{\omega}(t)$&The sinc funtion with parameter $\omega$, $sinc_{\omega}(t)=\frac{sin(\omega t)}{\pi t}$. The sinc function is the time-domain waveform of an ideal low-pass filter.\\
        $sinc^{\omega}_{x,y}$&2-D sinc function with parameter $\omega$, $sinc^{\omega}_{x,y}=\frac{sin (\omega x)}{\pi x}\cdot\frac{sin(\omega y)}{\pi y}$\\
        $s_{\Delta T}(t)$&1-D sample signal with a sample interval of $\Delta T$, 
 $s_{\Delta T}(t)=\sum\limits_{n=-\infty}^{\infty}\delta(t-nT)$\\
        \bottomrule
    \end{tabularx}
    \label{tab:symbols}
    \caption{Notation Conventions}
\end{table*}
\section{Signal Processing Theories}
We briefly introduce some related concepts and methods used in this paper in this section.
\subsection{System and Response\label{sec:system_and_response}}
The word `system' has many meanings and interpretations. This paper views a system as a process in which input signals are transformed by the system or cause the system to respond in some way, resulting in other signals as output \citep{signalandsystem}. Systems can be divided into linear systems and nonlinear systems according to their mathematical properties. A linear system refers to a system with such a property: the response of the system to the input $x_1(t)$, $x_2(t)$ is $y_1(t)$, $y_2(t)$ respectively, then when the input is $x_1(t)+x_2(t)$, the response of the system is $y_1(t)+y_2(t)$. 

Systems can also be divided into time-variant ones and time-invariant ones according to their temporal properties. A time-invariant system refers to that the properties of the system do not change with time, that is, the system has the same impulse response at any time. It satisfies such a relationship: when the input is $x(t)$, the output is $y(t)$, and when the input is $x(t-t_0)$, the output is $y(t-t_0)$.

A system with both linear and time-invariant properties is a linear time-invariant (LTI) system. For an LTI system, we can use `impulse response' to uniquely describe it: systems with the same impulse response are the same system, vice versa. The impulse response $h(t)$ is defined as the output of the system when the input signal is $\delta(t)$ (Dirac delta function). The response of a linear system to an arbitrary input signal can be computed through the convolution operation of its impulse response and the input signal, namely:
\begin{equation}
y(t)=x(t)*h(t)=\int_{-\infty}^{+\infty}x(\tau)h(t-\tau)d\tau.
\end{equation}
In the equation, $*$ is the convolution operator, $y(t)$ is the system output and $x(t)$ is the input signal. When we apply Fourier transform to the impulse response $h(t)$, then we can obtain the transfer function $H(j\omega)$ of the system. The transfer function describes the frequency domain waveform of the impulse response. According to the convolution theorem, the response of a linear system can also be obtained by multiplying the Fourier transform of the input signal by the transfer function of the system and then performing the inverse Fourier transform. In summary, given the impulse response of an LTI system, we can calculate the system's response to any output.

\subsection{Signal Sampling and Recovery\label{sec:samp_rec}}
\begin{figure}[h]
     \centering
     \includegraphics[scale=0.15]{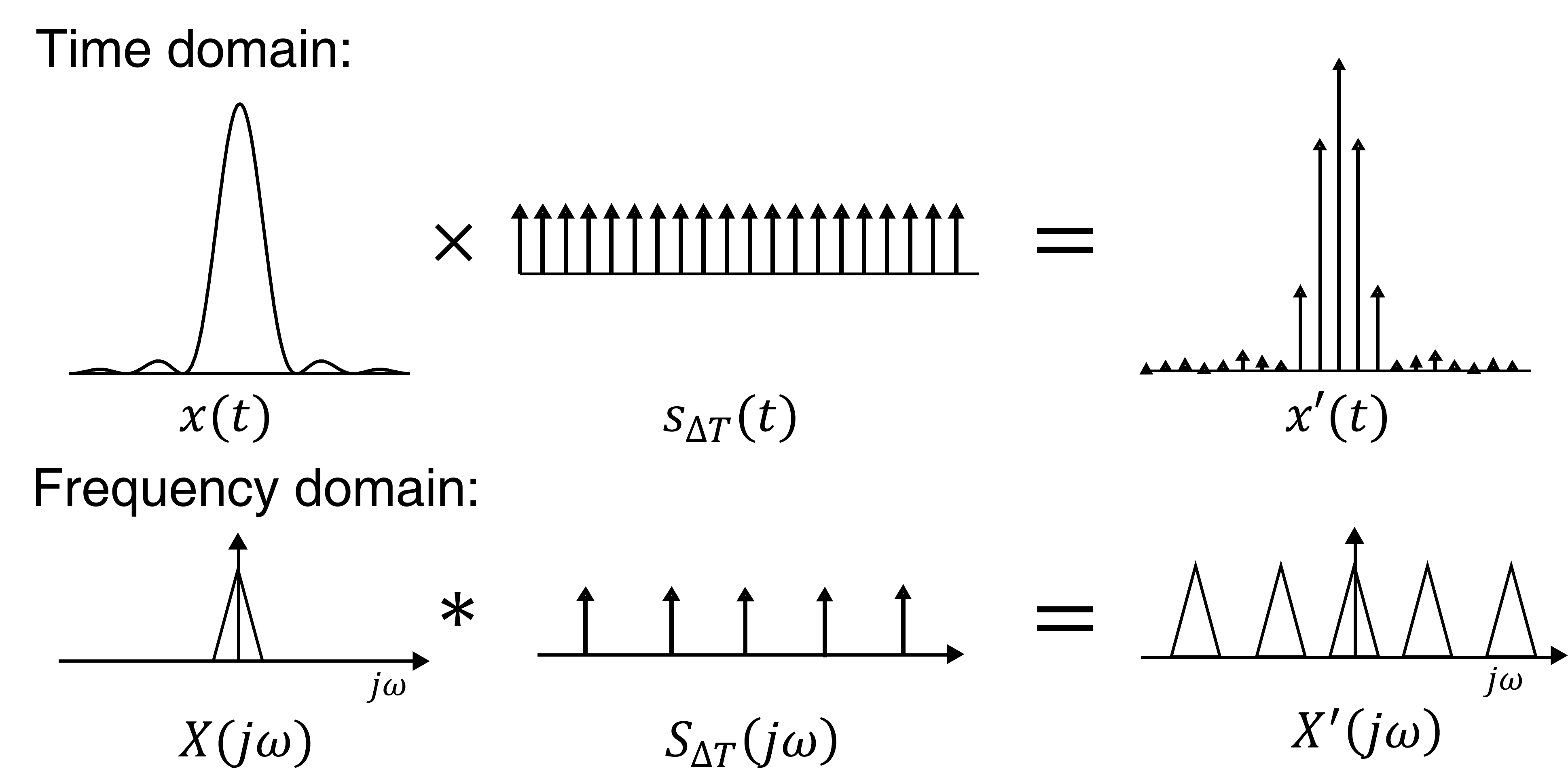}
     \caption{Time-domain to frequency-domain waveform variation of the continuous signal sampling process. The sampling function $s_{\delta T}$ is an impulse train sequence with an interval of $T$, and $S_{\delta T}(t)$ is its frequency domain waveform, which is also an impulse train sequence. Sampling a signal causes duplication in the frequency domain.}
     \label{fig:sampling}
\end{figure}
\begin{figure}[h]
     \centering
     \includegraphics[scale=0.15]{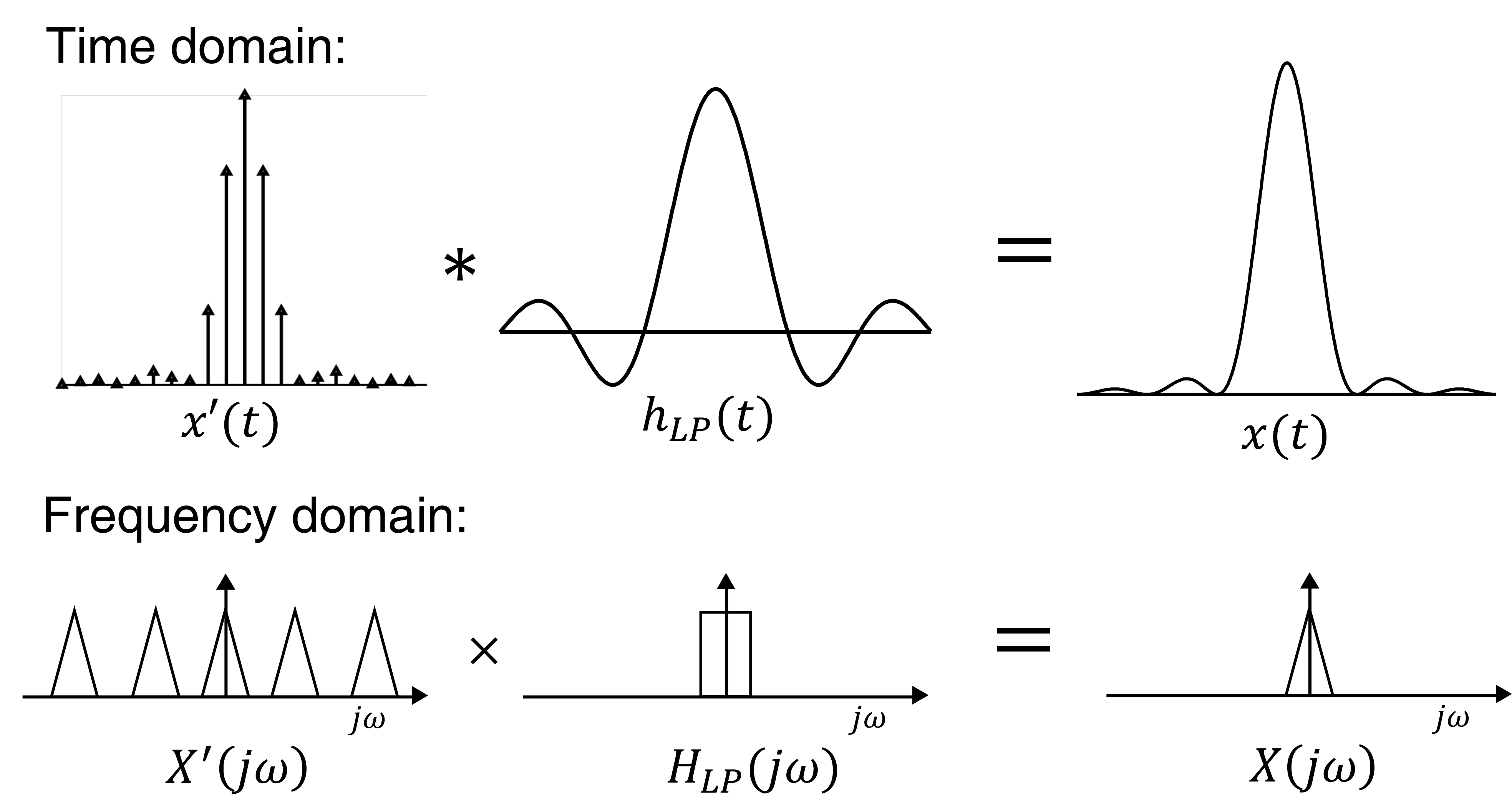}
     \caption{Time-domain to frequency-domain waveform variation in the process of sampling signal recovery. $h_{LP}(t)$ is the time-domain impulse response of a low-pass filter, and $H_{LP}(j\omega)$ is its frequency-domain waveform. }
     \label{fig:sap_rec}
\end{figure}
Signal sampling and sample recovery are very common operations, and in this section, we will briefly analyze this process from the perspective of both the time-domain and frequency-domain. The upper part of \Cref{fig:sampling} shows the time domain waveform variation of signal sampling process, and the lower part shows the frequency domain waveform variation of signal sampling process. To sample a continuous signal, the sampling process can be regarded as the multiplication of the original signal $x(t) $and an impulse train signal $s_{\Delta T}(t)$. It can be described as:
\begin{equation}
     \left\{\begin{array}{c}
          x'(t)=x(t)\cdot s_{\Delta T}(t),\\
         s_{\Delta T}(t)=\sum\limits_{n=-\infty}^{\infty}\delta(t-nT),
     \end{array}\right.
\end{equation}
where $T$ denotes the sampling interval, $x'(t)$ denotes the sampled signal. According to the convolution theorem, the frequency domain change of the sampling process can be described in the following way:
\begin{equation}
     \begin{aligned}
         X'(j\omega)&=X(j\omega)*S_{\delta T}(j\omega)\\
         &=\sum\limits_{k=-\infty}^{\infty}X[j(\omega-k\frac{2\pi}{T})].
     \end{aligned}
\end{equation}
\textbf{That is, the sampling process is reflected in the frequency spectrum as a periodic extension of the frequency spectrum of the original signal $x(t)$. }
\par
\Cref{fig:sap_rec} shows the time domain and frequency domain waveform variation during the recovery process. For sampling recovery, in order to restore the sampled signal $x'(t)$ to the original signal $x(t)$, from the perspective of frequency domain, a low-pass filter is all we needed, that is, convolving the sampled signal with a low-pass filter $h_{LP}(t)$. This process can be expressed as:
\begin{equation}
     \begin{aligned}
         x(t)&=x'(t)*h_{LP}(t)\\
         &=x'(t)*sinc_\omega(t),
     \end{aligned}
\end{equation}
where in the equation, $h_{LP}(t)=sinc_{\omega_0}(t)=\frac{sin(\omega_0t)}{\pi t}$ is the time domain response of the ideal low-pass filter, and its frequency-domain waveform $H_{LP }(j\omega)$ is a rectangular window.

\subsection{Spectrum Aliasing\label{sec:freq_overlap}}
\begin{figure*}[h]
     \centering
     \resizebox{\textwidth}{!}{
     \includegraphics[scale=0.12]{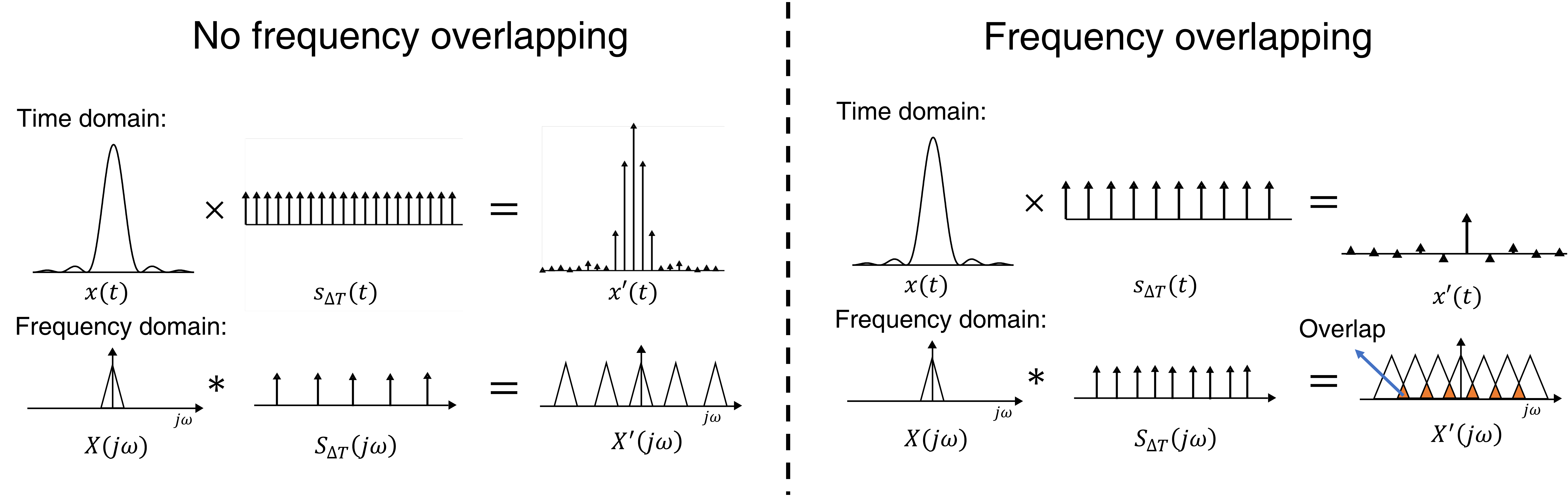}}
     \caption{The illustration of spectrum aliasing. On the left, there is no aliasing as the sampling rate is sufficiently high. On the right, aliasing occurs due to an insufficient sampling rate. }
     \label{fig:freq-overlapping}
\end{figure*}
Spectrum aliasing is a manifestation of information loss. \Cref{fig:freq-overlapping} depicts the time-domain and frequency-domain scenarios of no frequency overlapping and frequency overlapping, respectively. When the sampling rate is lower than the Nyquist sampling rate\footnote{The minimum sampling rate that can completely restore the origin of the sampled from sampled signal, which is twice the highest frequency of the original signal} \citep{nyquist}. When the sampling rate is below the Nyquist sampling rate, the approach mentioned in \Cref{sec:samp_rec} cannot completely restore the original signal $x(t)$. From \Cref{tab:ft-pair}, we can see that for the sample signal $s_{\Delta T}(t)$, the larger $T$ is, the sparser its time domain impulse train gets, while in the frequency spectrum the impulse trains gets denser. When the impulse trains in the frequency domain become sufficiently dense, and the spectrum of the original signal is periodically extended, overlapping occurs, preventing the complete recovery of the original signal. In ISR tasks, spectrum aliasing is manifested when restoring a low-resolution image to a high-resolution image, resulting in the loss of high-frequency information such as details and textures.
\section{Extra Experiments\label{sec:extra_exp}}
\subsection{Linear and Non-linear Responses}
\begin{figure}[h]
    \centering
    \includegraphics[width=\linewidth]{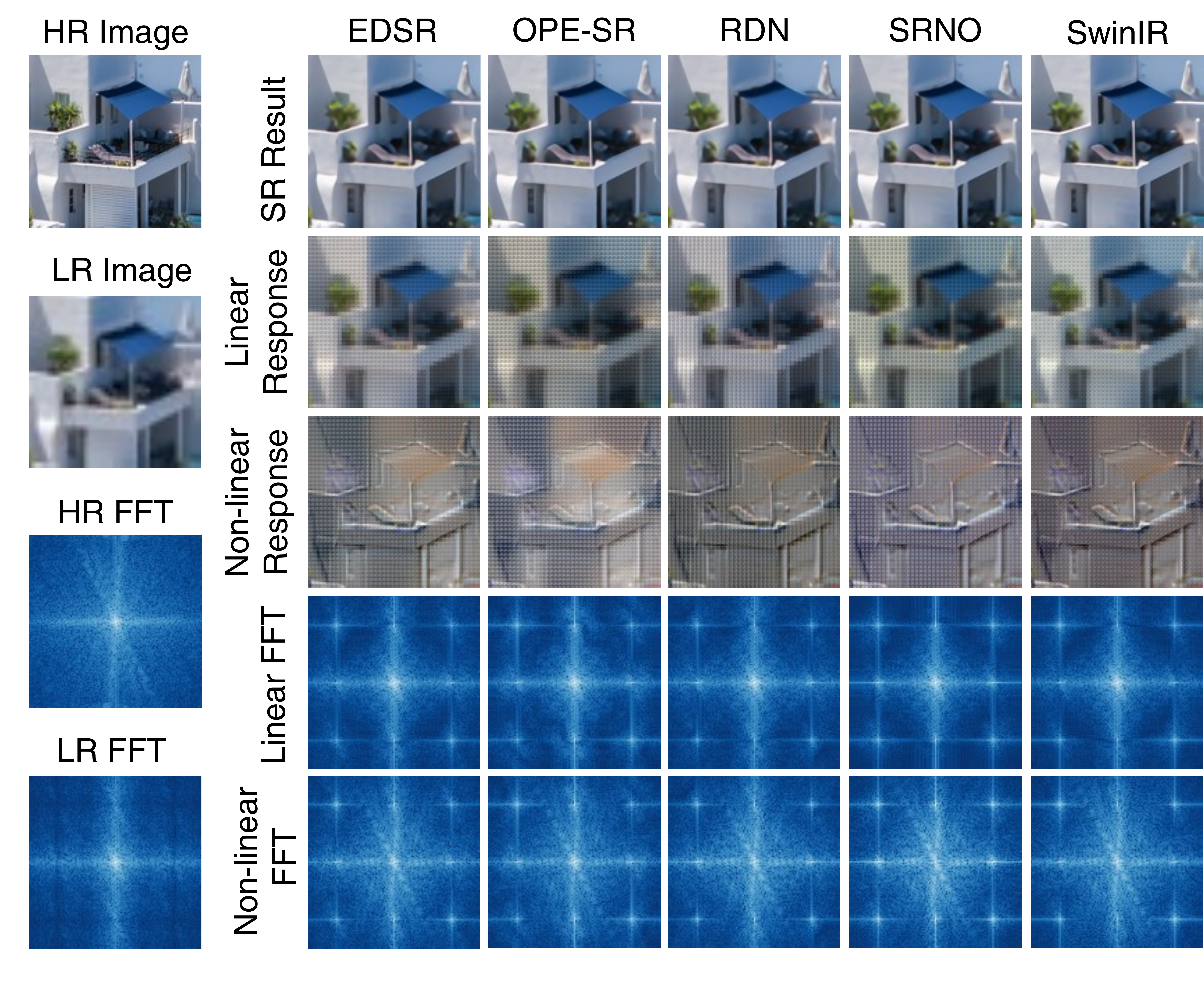}
    \caption{Linear and non-linear responses and their corresponding frequency spectrum of various ISR methods.}
    \label{fig:demonstrate}
\end{figure}
In \Cref{fig:demonstrate}, we present the linear and nonlinear responses of various ISR networks along with their corresponding spectrums. From the figure, it is evident that different networks exhibit varying filtering effects in their linear components. EDSR demonstrates a pronounced removal of high-frequency components, and compared to other methods, it exhibits the smallest area of brightness diffusion around the central bright spot in its spectrum. From the nonlinear responses, it can be observed that the nonlinear components of the networks are all involved in supplementing high-frequency information and correcting distortions.
\subsection{Space Invariance}
\begin{figure}[h]
    \centering
    \includegraphics[scale=0.15]{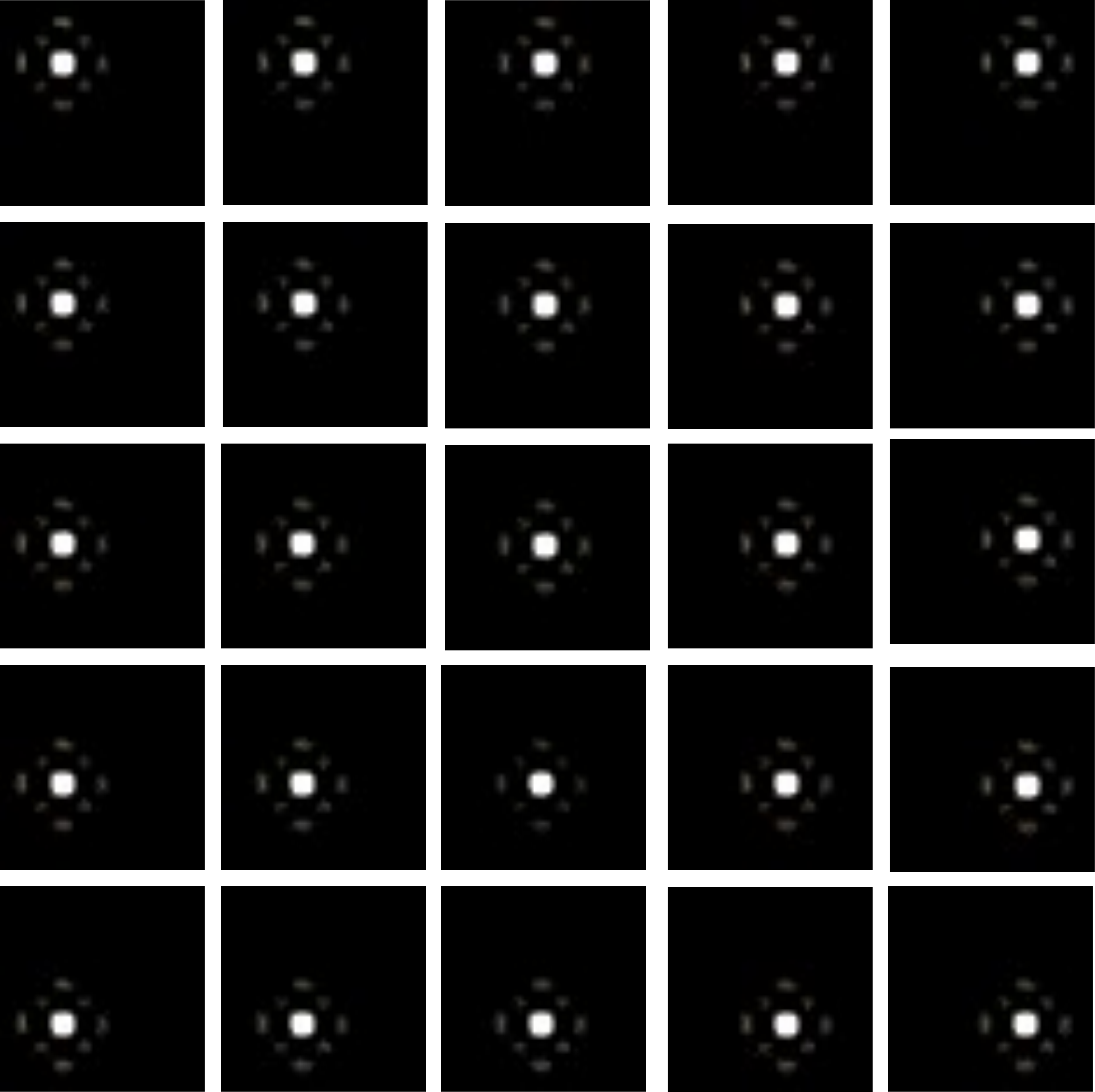}
    \caption{Spatial invariance experiment conducted on SwinIR \citep{swinir}. When we feed the SwinIR network with impulses at various positions, the ISR results demonstrate that the RDN exhibits spatial invariance.}
    \label{fig:space_invariance}
\end{figure}
We conducted spatial invariance testing on RDN  \citep{rdn} (For the concept of spatial invariance, please refer to \Cref{sec:HyRA}). The input data consists of an image where only one pixel is white (the pixel value is 1), and all other pixels are black (the pixel value is 0). By shifting the position of this white pixel, we obtain $I(x-\Delta x,y-\Delta y)$. This shifted input $I(x-\Delta x,y-\Delta y)$ is then fed into the neural network, and we obtain its shifted impulse response, as illustrated in \Cref{fig:space_invariance}. Observing the experimental results, we find that the responses to different $I(x-\Delta x,y-\Delta y)$ are consistent, with the only difference being their position. This demonstrates that, for ISR networks, the linear component in HyRA exhibits spatial invariance.
\subsection{Exploration of the Positional Origin of Sinc-like Patterns}
\begin{figure*}[ht!]
    \centering
    \includegraphics[scale=0.12]{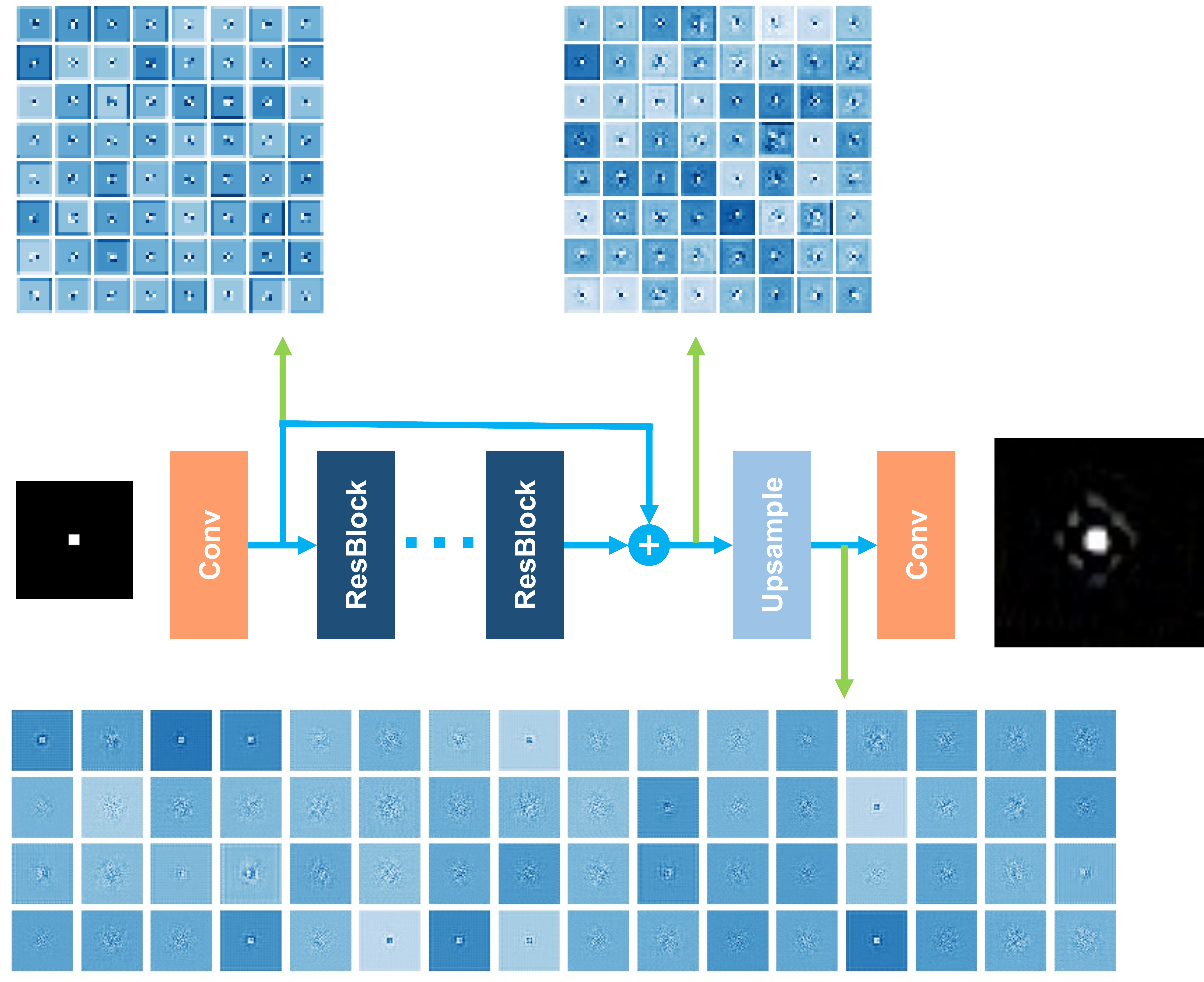}
    \caption{Visualization of feature maps in different EDSR  \citep{edsr} network layers. The sinc-like pattern start to take shapes after the sub-pixel convolution, before the last convolution layer.}
    \label{fig:sinc form}
\end{figure*}
As shown in \Cref{fig:sinc form}, we visualize the output features of different components in the EDSR  \citep{edsr} network for analysis. We observe that the approximate shape of the sinc function begins to take form after the Upsampler module, and after a convolution, it essentially forms the shape of a sinc function. Interestingly, in the EDSR network, the Upsample module uses sub-pixel convolution (convolution + pixel shuffle) for upsampling without any interpolation. This indicates that the low-pass filter present in the network is learned by the network itself and not introduced by interpolation kernels.
\section{The Fourier Transform Pairs Involved in This Paper}
\begin{table*}[h]
    \centering
    \renewcommand{\arraystretch}{2.5} 
    \begin{tabularx}{\textwidth}{XlXl}
    \toprule
    Symbol/Name&Section(s)&Time domain& Frequency Domain  \\
    \midrule
    $s_{\Delta T}(t)$, $\Delta T$ is the samping interval&\Cref{sec:samp_rec}, \Cref{sec:freq_overlap}&$\sum\limits_{n=-\infty}^{\infty}\delta(t-nT)$&$\frac{2\pi}{T}\sum\limits_{k=-\infty}^{\infty}\delta(\omega-k\frac{2\pi}{T})$\\
    Ideal Low-pass filter&\Cref{sec:hi_is_lpfing}&$x(t)=sinc_{\omega_0}(t)=\frac{sin(\omega_0t)}{\pi t}$, $\omega_0$ is called the cut-off frequency &$X(j\omega)=\begin{cases}
        1,&|\omega|<\omega_0  \\
         0,&|\omega |>\omega_0
    \end{cases}$\\
$\delta(t)$&\Cref{sec:hi_is_lpfing}&$\lim\limits_{\tau\rightarrow 0}\int_{-\tau}^{+\tau}\delta(t)=1$&1\\
         
    \bottomrule
    \end{tabularx}
    \caption{Fourier transform pairs}
    \label{tab:ft-pair}
\end{table*}

\section{The Windowing Operation\label{sec:app_windowing}}
\begin{figure}[h]
    \centering
    \includegraphics[scale=0.2]{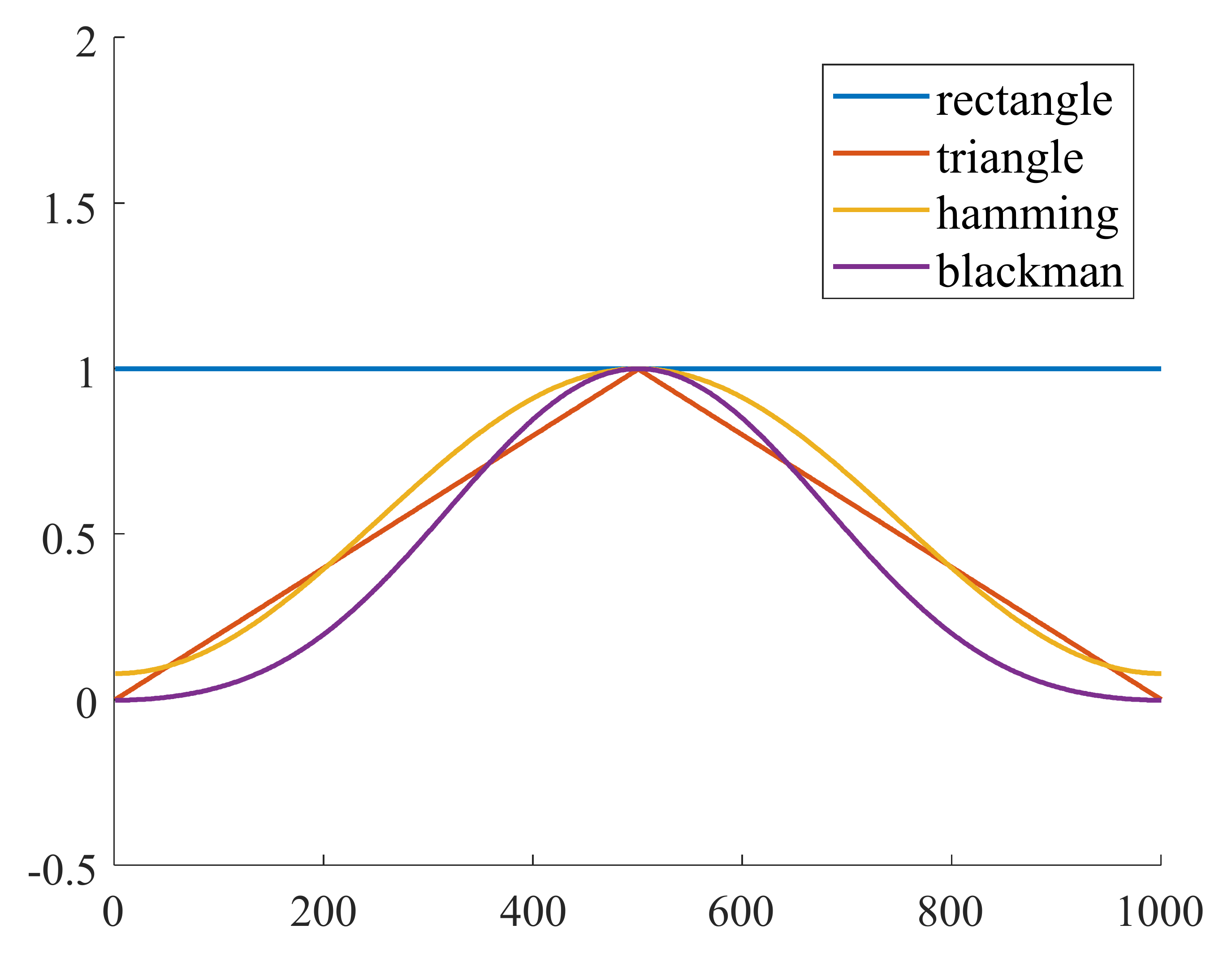}
    \caption{Various window functions.}
    \label{fig:VariousWindows}
\end{figure}
The time-domain waveform of an ideal low-pass filter is a sinc function. The sinc function is defined over $[-\infty, \infty]$, and the number of zero crossings is countable. This implies that in reality, an ideal low-pass filter does not exist. In discrete-time signal processing, truncating a designed filter using a window function is common. There are many window functions, such as the rectangular window, Hanning window, Blackman window, and so on. \Cref{fig:VariousWindows} illustrates some commonly used window functions. Observing the experimental results and analyzing the relationship between the peak values of the main lobe and the first side lobe, we find that the impulse response of the neural network seems to undergo windowing. However, different networks appear to adopt different window functions.

\section{Frequency Spectrum Period Extension Caused by Zero Padding \label{sec:freq_spec_reduplicate}}
\begin{figure}[h]
    \centering
    \includegraphics[scale=0.13]{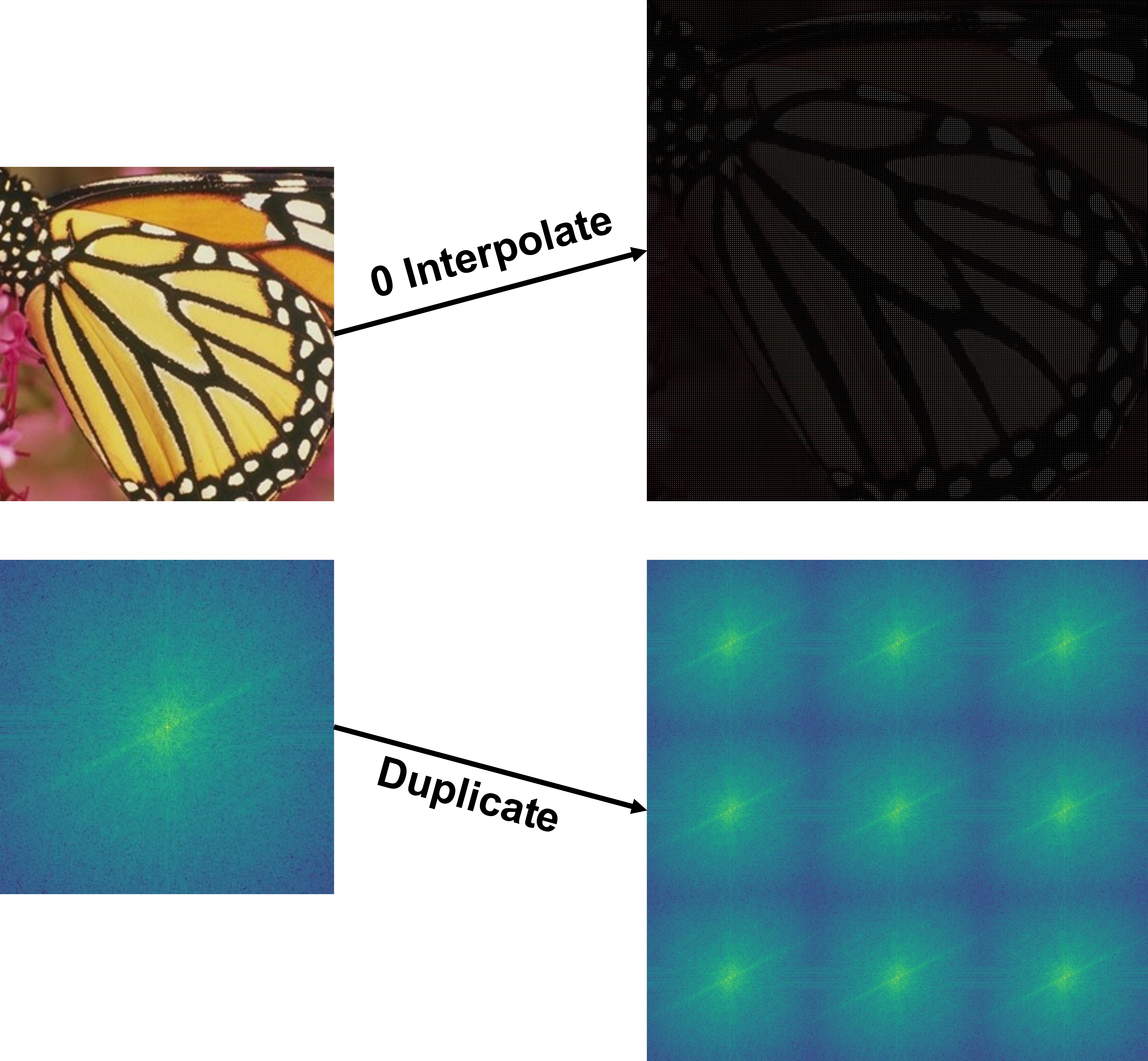}
    \caption{Performing zero-padding on an image to reach the target size will result in periodic extension in the frequency spectrum obtained through its Discrete Fourier Transform.}
    \label{fig:enter-label}
\end{figure}
When considering integer factor ISR, our approach to computing the linear component response is as follows: first, upsample the low-resolution image to the high-resolution image through zero-padding, and then convolve it with the impulse response to obtain the response. During the zero-padding process, it leads to period extension in the frequency spectrum. For a signal $x[n]$ of length $N$ undergoing DFT to obtain $X[k]$, we have:
\begin{equation}
    X[k]=\sum\limits_{n=0}^{N-1}x[n]e^{-j\frac{2\pi}{N}kn}.
\end{equation}
Then, zero-padding is applied to $x[n]$, producing in a new signal $x_2[n]$ of length $3N$:
\begin{equation}
    x_2[n]=\left\{\begin{array}{ll}
         x[\frac{n}{3}],&n=0,3,\cdots,3N-3 \\
         0,&otherwise.
    \end{array}\right.
\end{equation}
Perform DFT to $x_2[n]$ to obtain $X'[k]$, then we have:
\begin{equation}
    \begin{aligned}
    X'[k]&=\sum\limits_{n=0}^{3N-1}x_2[n]e^{-j\frac{2\pi}{3N}k\cdot n}\\
    &=\sum\limits_{n=0}^{N-1}x[n]e^{-j\frac{2\pi}{3N}k\cdot 3n}\\
    &=\sum\limits_{n=0}^{N-1}x[n]e^{-j\frac{2\pi}{N}kn}\\ 
    \end{aligned}.
\end{equation}
When $k<N$, there exists:
\begin{equation}
    e^{-j\frac{2\pi}{N}kn}=e^{-j\frac{2\pi}{N}(k+N)n}=e^{-j\frac{2\pi}{N}(k+2N)n}=\cdots.
\end{equation}
Therefore,
\begin{equation}
    X'[k]=\left\{\begin{array}{ll}
         X[k]& 0\leq k<N\\
         X[k\ mod\ N]& N\leq k <3N-1
    \end{array}\right.
\end{equation}
Thus, zero-padding causes period extension in the frequency spectrum. Ideally, the extended spectrum would be filtered out by the low-pass filter used in the ISR process. However, due to the limited filtering capability of the potential filters within the neural network, the stopband attenuation is low, and the extended spectrum cannot be completely filtered out.

\section{Comparison Between FSDS and $\ell_1$, $\ell_2$ norms}

We compare our proposed FSDS metric with $\ell_1$ norm, and $\ell_2$ norm on both frequency domain and image domain as depicted in \Cref{tab:l1} and \Cref{tab:l2}. From these two figures, we can observe that $\ell_1$ norm and $\ell_2$ norm produces similar ranking orders when they are calculated on the same domain, indicating that $\ell_1$ norm is equivalent to $\ell_2$ norm when assessing image quality. However, the ranking orders produced by FSDS is distinctive to that of $\ell_1$ and $\ell_2$. This means our FSDS metric reflects image quality in a unique way.




\begin{table}[h]
    \centering
\resizebox{\textwidth}{!}{
    \begin{tabular}{lcccccccccccccccc}
    \toprule 
         \multirow{3}*{Method}&\multicolumn{5}{c}{{FSDS (Ours)}}&\multicolumn{5}{c}{{$\ell_1$ Norm in Frequency Domain}}&\multicolumn{5}{c}{{$\ell_1$ Norm}}\\
         \cmidrule(l{2pt}r{2pt}){2-6}\cmidrule(l{2pt}r{2pt}){7-11}\cmidrule(l{2pt}r{2pt}){12-16}&{$\times 2$}&{$\times 3$}&{$\times 4$}&{$\times 6$}&{$\times 12$}&{$\times 2$}&{$\times 3$}&{$\times 4$}&{$\times 6$}&{$\times 12$}&{$\times 2$}&{$\times 3$}&{$\times 4$}&{$\times 6$}&{$\times 12$}\\
         
         \midrule
         EDSR\citep{edsr}&39.210\textcolor{gray}{\textsuperscript{15}}&34.148\textcolor{gray}{\textsuperscript{10}}&31.380\textcolor{gray}{\textsuperscript{7}}&-&-&35.190\textcolor{gray}{\textsuperscript{12}}&51.676\textcolor{gray}{\textsuperscript{12}}&62.478\textcolor{gray}{\textsuperscript{16}}&-&-&0.012\textcolor{gray}{\textsuperscript{10}}&0.019\textcolor{gray}{\textsuperscript{13}}&0.023\textcolor{gray}{\textsuperscript{16}}&-&-\\
EDSR-LIIF\citep{edsr}&39.371\textcolor{gray}{\textsuperscript{13}}&34.535\textcolor{gray}{\textsuperscript{6}}&31.321\textcolor{gray}{\textsuperscript{9}}&28.446\textcolor{gray}{\textsuperscript{4}}&23.147\textcolor{gray}{\textsuperscript{3}}&35.369\textcolor{gray}{\textsuperscript{14}}&51.886\textcolor{gray}{\textsuperscript{14}}&61.996\textcolor{gray}{\textsuperscript{15}}&73.963\textcolor{gray}{\textsuperscript{4}}&88.203\textcolor{gray}{\textsuperscript{5}}&0.013\textcolor{gray}{\textsuperscript{15}}&0.019\textcolor{gray}{\textsuperscript{15}}&0.023\textcolor{gray}{\textsuperscript{15}}&0.029\textcolor{gray}{\textsuperscript{4}}&0.042\textcolor{gray}{\textsuperscript{4}}\\
EDSR-OPESR\citep{edsr}&39.798\textcolor{gray}{\textsuperscript{6}}&34.630\textcolor{gray}{\textsuperscript{5}}&31.286\textcolor{gray}{\textsuperscript{11}}&-&-&36.796\textcolor{gray}{\textsuperscript{16}}&51.948\textcolor{gray}{\textsuperscript{15}}&61.711\textcolor{gray}{\textsuperscript{14}}&-&-&0.013\textcolor{gray}{\textsuperscript{16}}&0.018\textcolor{gray}{\textsuperscript{11}}&0.023\textcolor{gray}{\textsuperscript{12}}&-&-\\
EDSR-SRNO\citep{edsr}&39.533\textcolor{gray}{\textsuperscript{11}}&34.527\textcolor{gray}{\textsuperscript{7}}&31.448\textcolor{gray}{\textsuperscript{6}}&28.458\textcolor{gray}{\textsuperscript{3}}&22.778\textcolor{gray}{\textsuperscript{4}}&34.688\textcolor{gray}{\textsuperscript{10}}&51.113\textcolor{gray}{\textsuperscript{10}}&61.140\textcolor{gray}{\textsuperscript{11}}&73.136\textcolor{gray}{\textsuperscript{3}}&87.704\textcolor{gray}{\textsuperscript{3}}&0.012\textcolor{gray}{\textsuperscript{11}}&0.018\textcolor{gray}{\textsuperscript{10}}&0.023\textcolor{gray}{\textsuperscript{10}}&0.029\textcolor{gray}{\textsuperscript{3}}&0.041\textcolor{gray}{\textsuperscript{3}}\\
EDSR-LTE\citep{edsr}&39.290\textcolor{gray}{\textsuperscript{14}}&34.328\textcolor{gray}{\textsuperscript{8}}&31.303\textcolor{gray}{\textsuperscript{10}}&-&-&35.110\textcolor{gray}{\textsuperscript{11}}&51.556\textcolor{gray}{\textsuperscript{11}}&61.706\textcolor{gray}{\textsuperscript{12}}&-&-&0.013\textcolor{gray}{\textsuperscript{14}}&0.019\textcolor{gray}{\textsuperscript{14}}&0.023\textcolor{gray}{\textsuperscript{14}}&-&-\\
RDN\citep{rdn}&40.022\textcolor{gray}{\textsuperscript{3}}&32.946\textcolor{gray}{\textsuperscript{16}}&31.646\textcolor{gray}{\textsuperscript{4}}&-&-&34.595\textcolor{gray}{\textsuperscript{9}}&53.431\textcolor{gray}{\textsuperscript{16}}&61.069\textcolor{gray}{\textsuperscript{10}}&-&-&0.013\textcolor{gray}{\textsuperscript{12}}&0.019\textcolor{gray}{\textsuperscript{16}}&0.023\textcolor{gray}{\textsuperscript{11}}&-&-\\
RDN-LIIF\citep{rdn}&39.690\textcolor{gray}{\textsuperscript{9}}&34.831\textcolor{gray}{\textsuperscript{3}}&31.832\textcolor{gray}{\textsuperscript{3}}&\textcolor{red}{28.894}\textcolor{gray}{\textsuperscript{1}}&\textcolor{red}{23.778}\textcolor{gray}{\textsuperscript{1}}&34.224\textcolor{gray}{\textsuperscript{8}}&50.434\textcolor{gray}{\textsuperscript{8}}&60.563\textcolor{gray}{\textsuperscript{9}}&\textcolor{blue}{72.753}\textcolor{gray}{\textsuperscript{2}}&\textcolor{blue}{87.488}\textcolor{gray}{\textsuperscript{2}}&0.012\textcolor{gray}{\textsuperscript{8}}&0.018\textcolor{gray}{\textsuperscript{9}}&0.022\textcolor{gray}{\textsuperscript{9}}&\textcolor{blue}{0.029}\textcolor{gray}{\textsuperscript{2}}&\textcolor{blue}{0.041}\textcolor{gray}{\textsuperscript{2}}\\
RDN-OPESR\citep{rdn}&\textcolor{blue}{40.188}\textcolor{gray}{\textsuperscript{2}}&\textcolor{blue}{34.959}\textcolor{gray}{\textsuperscript{2}}&31.475\textcolor{gray}{\textsuperscript{5}}&-&-&36.126\textcolor{gray}{\textsuperscript{15}}&50.792\textcolor{gray}{\textsuperscript{9}}&60.388\textcolor{gray}{\textsuperscript{8}}&-&-&0.013\textcolor{gray}{\textsuperscript{13}}&0.018\textcolor{gray}{\textsuperscript{7}}&0.022\textcolor{gray}{\textsuperscript{7}}&-&-\\
RDN-LTE\citep{rdn}&39.825\textcolor{gray}{\textsuperscript{5}}&34.749\textcolor{gray}{\textsuperscript{4}}&\textcolor{blue}{31.837}\textcolor{gray}{\textsuperscript{2}}&\textcolor{blue}{28.654}\textcolor{gray}{\textsuperscript{2}}&\textcolor{blue}{23.346}\textcolor{gray}{\textsuperscript{2}}&33.997\textcolor{gray}{\textsuperscript{7}}&50.106\textcolor{gray}{\textsuperscript{7}}&60.229\textcolor{gray}{\textsuperscript{7}}&\textcolor{red}{72.301}\textcolor{gray}{\textsuperscript{1}}&\textcolor{red}{87.090}\textcolor{gray}{\textsuperscript{1}}&0.012\textcolor{gray}{\textsuperscript{7}}&0.018\textcolor{gray}{\textsuperscript{8}}&0.022\textcolor{gray}{\textsuperscript{8}}&\textcolor{red}{0.029}\textcolor{gray}{\textsuperscript{1}}&\textcolor{red}{0.041}\textcolor{gray}{\textsuperscript{1}}\\
SwinIR-classical\citep{swinir}&\textcolor{red}{40.372}\textcolor{gray}{\textsuperscript{1}}&\textcolor{red}{35.125}\textcolor{gray}{\textsuperscript{1}}&\textcolor{red}{32.370}\textcolor{gray}{\textsuperscript{1}}&-&-&32.750\textcolor{gray}{\textsuperscript{5}}&48.380\textcolor{gray}{\textsuperscript{4}}&58.579\textcolor{gray}{\textsuperscript{4}}&-&-&0.012\textcolor{gray}{\textsuperscript{5}}&0.017\textcolor{gray}{\textsuperscript{5}}&0.021\textcolor{gray}{\textsuperscript{5}}&-&-\\
ITSRN\citep{ITSRN}&31.254\textcolor{gray}{\textsuperscript{18}}&26.178\textcolor{gray}{\textsuperscript{18}}&25.876\textcolor{gray}{\textsuperscript{18}}&25.619\textcolor{gray}{\textsuperscript{5}}&21.566\textcolor{gray}{\textsuperscript{5}}&41.928\textcolor{gray}{\textsuperscript{17}}&53.459\textcolor{gray}{\textsuperscript{17}}&62.927\textcolor{gray}{\textsuperscript{17}}&74.260\textcolor{gray}{\textsuperscript{5}}&88.160\textcolor{gray}{\textsuperscript{4}}&0.015\textcolor{gray}{\textsuperscript{17}}&0.020\textcolor{gray}{\textsuperscript{17}}&0.024\textcolor{gray}{\textsuperscript{17}}&0.030\textcolor{gray}{\textsuperscript{5}}&0.042\textcolor{gray}{\textsuperscript{5}}\\
Bicubic&32.790\textcolor{gray}{\textsuperscript{17}}&28.896\textcolor{gray}{\textsuperscript{17}}&26.568\textcolor{gray}{\textsuperscript{17}}&23.450\textcolor{gray}{\textsuperscript{6}}&18.736\textcolor{gray}{\textsuperscript{6}}&50.571\textcolor{gray}{\textsuperscript{18}}&64.949\textcolor{gray}{\textsuperscript{18}}&73.412\textcolor{gray}{\textsuperscript{18}}&82.567\textcolor{gray}{\textsuperscript{6}}&93.012\textcolor{gray}{\textsuperscript{6}}&0.018\textcolor{gray}{\textsuperscript{18}}&0.024\textcolor{gray}{\textsuperscript{18}}&0.029\textcolor{gray}{\textsuperscript{18}}&0.036\textcolor{gray}{\textsuperscript{6}}&0.050\textcolor{gray}{\textsuperscript{6}}\\
HAT-S\citep{hat}&39.784\textcolor{gray}{\textsuperscript{7}}&33.799\textcolor{gray}{\textsuperscript{13}}&31.057\textcolor{gray}{\textsuperscript{14}}&-&-&\textcolor{blue}{32.387}\textcolor{gray}{\textsuperscript{2}}&48.047\textcolor{gray}{\textsuperscript{3}}&58.232\textcolor{gray}{\textsuperscript{3}}&-&-&\textcolor{blue}{0.011}\textcolor{gray}{\textsuperscript{2}}&0.017\textcolor{gray}{\textsuperscript{3}}&0.021\textcolor{gray}{\textsuperscript{3}}&-&-\\
HAT\citep{hat}&39.784\textcolor{gray}{\textsuperscript{7}}&33.913\textcolor{gray}{\textsuperscript{11}}&31.196\textcolor{gray}{\textsuperscript{12}}&-&-&\textcolor{blue}{32.387}\textcolor{gray}{\textsuperscript{2}}&\textcolor{blue}{47.815}\textcolor{gray}{\textsuperscript{2}}&\textcolor{blue}{58.052}\textcolor{gray}{\textsuperscript{2}}&-&-&\textcolor{blue}{0.011}\textcolor{gray}{\textsuperscript{2}}&\textcolor{blue}{0.017}\textcolor{gray}{\textsuperscript{2}}&\textcolor{blue}{0.021}\textcolor{gray}{\textsuperscript{2}}&-&-\\
HDSRNet\citep{hdsr}&39.458\textcolor{gray}{\textsuperscript{12}}&34.203\textcolor{gray}{\textsuperscript{9}}&31.325\textcolor{gray}{\textsuperscript{8}}&-&-&35.245\textcolor{gray}{\textsuperscript{13}}&51.690\textcolor{gray}{\textsuperscript{13}}&61.709\textcolor{gray}{\textsuperscript{13}}&-&-&0.012\textcolor{gray}{\textsuperscript{9}}&0.019\textcolor{gray}{\textsuperscript{12}}&0.023\textcolor{gray}{\textsuperscript{13}}&-&-\\
GRLBase\citep{grl}&39.990\textcolor{gray}{\textsuperscript{4}}&33.840\textcolor{gray}{\textsuperscript{12}}&31.121\textcolor{gray}{\textsuperscript{13}}&-&-&\textcolor{red}{31.807}\textcolor{gray}{\textsuperscript{1}}&\textcolor{red}{47.263}\textcolor{gray}{\textsuperscript{1}}&\textcolor{red}{57.296}\textcolor{gray}{\textsuperscript{1}}&-&-&\textcolor{red}{0.011}\textcolor{gray}{\textsuperscript{1}}&\textcolor{red}{0.017}\textcolor{gray}{\textsuperscript{1}}&\textcolor{red}{0.021}\textcolor{gray}{\textsuperscript{1}}&-&-\\
GRLSmall\citep{grl}&39.544\textcolor{gray}{\textsuperscript{10}}&33.679\textcolor{gray}{\textsuperscript{14}}&31.027\textcolor{gray}{\textsuperscript{15}}&-&-&32.714\textcolor{gray}{\textsuperscript{4}}&48.593\textcolor{gray}{\textsuperscript{5}}&58.805\textcolor{gray}{\textsuperscript{5}}&-&-&0.012\textcolor{gray}{\textsuperscript{4}}&0.017\textcolor{gray}{\textsuperscript{4}}&0.021\textcolor{gray}{\textsuperscript{4}}&-&-\\
GRLTiny\citep{grl}&39.205\textcolor{gray}{\textsuperscript{16}}&33.197\textcolor{gray}{\textsuperscript{15}}&30.556\textcolor{gray}{\textsuperscript{16}}&-&-&33.394\textcolor{gray}{\textsuperscript{6}}&49.550\textcolor{gray}{\textsuperscript{6}}&59.835\textcolor{gray}{\textsuperscript{6}}&-&-&0.012\textcolor{gray}{\textsuperscript{6}}&0.018\textcolor{gray}{\textsuperscript{6}}&0.022\textcolor{gray}{\textsuperscript{6}}&-&-\\

    \bottomrule
    \end{tabular}
}
    
    \caption{Comparison between our proposed FSDS metric and $\ell_1$ norm in both frequency domain and image domain. Items with the highest mean values are highlighted in red and secondary mean values in blue. The gray superscripts denote the ranking order.}
    \label{tab:l1}
\end{table}

\begin{table}[h]
    \centering
\resizebox{\textwidth}{!}{
    \begin{tabular}{lcccccccccccccccc}
    \toprule 
         \multirow{3}*{Method}&\multicolumn{5}{c}{{FSDS (Ours)}}&\multicolumn{5}{c}{{$\ell_2$ Norm in Frequency Domain}}&\multicolumn{5}{c}{{$\ell_2$ Norm}}\\
         \cmidrule(l{2pt}r{2pt}){2-6}\cmidrule(l{2pt}r{2pt}){7-11}\cmidrule(l{2pt}r{2pt}){12-16}&{$\times 2$}&{$\times 3$}&{$\times 4$}&{$\times 6$}&{$\times 12$}&{$\times 2$}&{$\times 3$}&{$\times 4$}&{$\times 6$}&{$\times 12$}&{$\times 2$}&{$\times 3$}&{$\times 4$}&{$\times 6$}&{$\times 12$}\\
         
         \midrule
         EDSR\citep{edsr}&39.210\textcolor{gray}{\textsuperscript{15}}&34.148\textcolor{gray}{\textsuperscript{10}}&31.380\textcolor{gray}{\textsuperscript{7}}&-&-&4966.905\textcolor{gray}{\textsuperscript{13}}&11163.148\textcolor{gray}{\textsuperscript{15}}&17206.009\textcolor{gray}{\textsuperscript{16}}&-&-&4966.906\textcolor{gray}{\textsuperscript{13}}&11163.149\textcolor{gray}{\textsuperscript{15}}&17206.010\textcolor{gray}{\textsuperscript{16}}&-&-\\
EDSR-LIIF\citep{edsr}&39.371\textcolor{gray}{\textsuperscript{13}}&34.535\textcolor{gray}{\textsuperscript{6}}&31.321\textcolor{gray}{\textsuperscript{9}}&28.446\textcolor{gray}{\textsuperscript{4}}&23.147\textcolor{gray}{\textsuperscript{3}}&4967.727\textcolor{gray}{\textsuperscript{14}}&11163.112\textcolor{gray}{\textsuperscript{14}}&16947.235\textcolor{gray}{\textsuperscript{15}}&27007.748\textcolor{gray}{\textsuperscript{4}}&49280.883\textcolor{gray}{\textsuperscript{4}}&4967.727\textcolor{gray}{\textsuperscript{14}}&11163.113\textcolor{gray}{\textsuperscript{14}}&16947.236\textcolor{gray}{\textsuperscript{15}}&27007.749\textcolor{gray}{\textsuperscript{4}}&49280.886\textcolor{gray}{\textsuperscript{4}}\\
EDSR-OPESR\citep{edsr}&39.798\textcolor{gray}{\textsuperscript{6}}&34.630\textcolor{gray}{\textsuperscript{5}}&31.286\textcolor{gray}{\textsuperscript{11}}&-&-&5247.229\textcolor{gray}{\textsuperscript{16}}&11117.694\textcolor{gray}{\textsuperscript{12}}&16818.598\textcolor{gray}{\textsuperscript{14}}&-&-&5247.229\textcolor{gray}{\textsuperscript{16}}&11117.695\textcolor{gray}{\textsuperscript{12}}&16818.599\textcolor{gray}{\textsuperscript{14}}&-&-\\
EDSR-SRNO\citep{edsr}&39.533\textcolor{gray}{\textsuperscript{11}}&34.527\textcolor{gray}{\textsuperscript{7}}&31.448\textcolor{gray}{\textsuperscript{6}}&28.458\textcolor{gray}{\textsuperscript{3}}&22.778\textcolor{gray}{\textsuperscript{4}}&4794.903\textcolor{gray}{\textsuperscript{9}}&10845.390\textcolor{gray}{\textsuperscript{10}}&16453.701\textcolor{gray}{\textsuperscript{10}}&26236.246\textcolor{gray}{\textsuperscript{3}}&48097.246\textcolor{gray}{\textsuperscript{3}}&4794.903\textcolor{gray}{\textsuperscript{9}}&10845.391\textcolor{gray}{\textsuperscript{10}}&16453.702\textcolor{gray}{\textsuperscript{10}}&26236.247\textcolor{gray}{\textsuperscript{3}}&48097.249\textcolor{gray}{\textsuperscript{3}}\\
EDSR-LTE\citep{edsr}&39.290\textcolor{gray}{\textsuperscript{14}}&34.328\textcolor{gray}{\textsuperscript{8}}&31.303\textcolor{gray}{\textsuperscript{10}}&-&-&4907.855\textcolor{gray}{\textsuperscript{11}}&11032.252\textcolor{gray}{\textsuperscript{11}}&16781.621\textcolor{gray}{\textsuperscript{13}}&-&-&4907.855\textcolor{gray}{\textsuperscript{11}}&11032.252\textcolor{gray}{\textsuperscript{11}}&16781.622\textcolor{gray}{\textsuperscript{13}}&-&-\\
RDN\citep{rdn}&40.022\textcolor{gray}{\textsuperscript{3}}&32.946\textcolor{gray}{\textsuperscript{16}}&31.646\textcolor{gray}{\textsuperscript{4}}&-&-&4820.540\textcolor{gray}{\textsuperscript{10}}&12055.745\textcolor{gray}{\textsuperscript{16}}&16462.594\textcolor{gray}{\textsuperscript{11}}&-&-&4820.541\textcolor{gray}{\textsuperscript{10}}&12055.746\textcolor{gray}{\textsuperscript{16}}&16462.595\textcolor{gray}{\textsuperscript{11}}&-&-\\
RDN-LIIF\citep{rdn}&39.690\textcolor{gray}{\textsuperscript{9}}&34.831\textcolor{gray}{\textsuperscript{3}}&31.832\textcolor{gray}{\textsuperscript{3}}&\textcolor{red}{28.894}\textcolor{gray}{\textsuperscript{1}}&\textcolor{red}{23.778}\textcolor{gray}{\textsuperscript{1}}&4627.806\textcolor{gray}{\textsuperscript{8}}&10477.303\textcolor{gray}{\textsuperscript{8}}&15993.797\textcolor{gray}{\textsuperscript{8}}&\textcolor{blue}{25720.766}\textcolor{gray}{\textsuperscript{2}}&\textcolor{blue}{47650.681}\textcolor{gray}{\textsuperscript{2}}&4627.806\textcolor{gray}{\textsuperscript{8}}&10477.304\textcolor{gray}{\textsuperscript{8}}&15993.798\textcolor{gray}{\textsuperscript{8}}&\textcolor{blue}{25720.768}\textcolor{gray}{\textsuperscript{2}}&\textcolor{blue}{47650.683}\textcolor{gray}{\textsuperscript{2}}\\
RDN-OPESR\citep{rdn}&\textcolor{blue}{40.188}\textcolor{gray}{\textsuperscript{2}}&\textcolor{blue}{34.959}\textcolor{gray}{\textsuperscript{2}}&31.475\textcolor{gray}{\textsuperscript{5}}&-&-&5061.535\textcolor{gray}{\textsuperscript{15}}&10609.168\textcolor{gray}{\textsuperscript{9}}&16034.925\textcolor{gray}{\textsuperscript{9}}&-&-&5061.535\textcolor{gray}{\textsuperscript{15}}&10609.169\textcolor{gray}{\textsuperscript{9}}&16034.926\textcolor{gray}{\textsuperscript{9}}&-&-\\
RDN-LTE\citep{rdn}&39.825\textcolor{gray}{\textsuperscript{5}}&34.749\textcolor{gray}{\textsuperscript{4}}&\textcolor{blue}{31.837}\textcolor{gray}{\textsuperscript{2}}&\textcolor{blue}{28.654}\textcolor{gray}{\textsuperscript{2}}&\textcolor{blue}{23.346}\textcolor{gray}{\textsuperscript{2}}&4576.652\textcolor{gray}{\textsuperscript{7}}&10349.273\textcolor{gray}{\textsuperscript{7}}&15808.890\textcolor{gray}{\textsuperscript{7}}&\textcolor{red}{25384.597}\textcolor{gray}{\textsuperscript{1}}&\textcolor{red}{46955.729}\textcolor{gray}{\textsuperscript{1}}&4576.652\textcolor{gray}{\textsuperscript{7}}&10349.274\textcolor{gray}{\textsuperscript{7}}&15808.891\textcolor{gray}{\textsuperscript{7}}&\textcolor{red}{25384.598}\textcolor{gray}{\textsuperscript{1}}&\textcolor{red}{46955.731}\textcolor{gray}{\textsuperscript{1}}\\
SwinIR-classical\citep{swinir}&\textcolor{red}{40.372}\textcolor{gray}{\textsuperscript{1}}&\textcolor{red}{35.125}\textcolor{gray}{\textsuperscript{1}}&\textcolor{red}{32.370}\textcolor{gray}{\textsuperscript{1}}&-&-&4222.226\textcolor{gray}{\textsuperscript{5}}&9617.212\textcolor{gray}{\textsuperscript{5}}&14804.533\textcolor{gray}{\textsuperscript{5}}&-&-&4222.226\textcolor{gray}{\textsuperscript{5}}&9617.212\textcolor{gray}{\textsuperscript{5}}&14804.534\textcolor{gray}{\textsuperscript{5}}&-&-\\
ITSRN\citep{ITSRN}&31.254\textcolor{gray}{\textsuperscript{18}}&26.178\textcolor{gray}{\textsuperscript{18}}&25.876\textcolor{gray}{\textsuperscript{18}}&25.619\textcolor{gray}{\textsuperscript{5}}&21.566\textcolor{gray}{\textsuperscript{5}}&7483.582\textcolor{gray}{\textsuperscript{17}}&12244.897\textcolor{gray}{\textsuperscript{17}}&17878.399\textcolor{gray}{\textsuperscript{17}}&27664.054\textcolor{gray}{\textsuperscript{5}}&49417.061\textcolor{gray}{\textsuperscript{5}}&7483.582\textcolor{gray}{\textsuperscript{17}}&12244.898\textcolor{gray}{\textsuperscript{17}}&17878.400\textcolor{gray}{\textsuperscript{17}}&27664.056\textcolor{gray}{\textsuperscript{5}}&49417.064\textcolor{gray}{\textsuperscript{5}}\\
Bicubic&32.790\textcolor{gray}{\textsuperscript{17}}&28.896\textcolor{gray}{\textsuperscript{17}}&26.568\textcolor{gray}{\textsuperscript{17}}&23.450\textcolor{gray}{\textsuperscript{6}}&18.736\textcolor{gray}{\textsuperscript{6}}&10873.900\textcolor{gray}{\textsuperscript{18}}&19564.473\textcolor{gray}{\textsuperscript{18}}&26817.580\textcolor{gray}{\textsuperscript{18}}&38541.675\textcolor{gray}{\textsuperscript{6}}&64011.698\textcolor{gray}{\textsuperscript{6}}&10873.901\textcolor{gray}{\textsuperscript{18}}&19564.474\textcolor{gray}{\textsuperscript{18}}&26817.581\textcolor{gray}{\textsuperscript{18}}&38541.677\textcolor{gray}{\textsuperscript{6}}&64011.702\textcolor{gray}{\textsuperscript{6}}\\
HAT-S\citep{hat}&39.784\textcolor{gray}{\textsuperscript{7}}&33.799\textcolor{gray}{\textsuperscript{13}}&31.057\textcolor{gray}{\textsuperscript{14}}&-&-&\textcolor{blue}{4107.382}\textcolor{gray}{\textsuperscript{2}}&9423.447\textcolor{gray}{\textsuperscript{3}}&14498.509\textcolor{gray}{\textsuperscript{3}}&-&-&\textcolor{blue}{4107.383}\textcolor{gray}{\textsuperscript{2}}&9423.448\textcolor{gray}{\textsuperscript{3}}&14498.510\textcolor{gray}{\textsuperscript{3}}&-&-\\
HAT\citep{hat}&39.784\textcolor{gray}{\textsuperscript{7}}&33.913\textcolor{gray}{\textsuperscript{11}}&31.196\textcolor{gray}{\textsuperscript{12}}&-&-&\textcolor{blue}{4107.382}\textcolor{gray}{\textsuperscript{2}}&\textcolor{blue}{9326.613}\textcolor{gray}{\textsuperscript{2}}&\textcolor{blue}{14407.457}\textcolor{gray}{\textsuperscript{2}}&-&-&\textcolor{blue}{4107.383}\textcolor{gray}{\textsuperscript{2}}&\textcolor{blue}{9326.613}\textcolor{gray}{\textsuperscript{2}}&\textcolor{blue}{14407.458}\textcolor{gray}{\textsuperscript{2}}&-&-\\
HDSRNet\citep{hdsr}&39.458\textcolor{gray}{\textsuperscript{12}}&34.203\textcolor{gray}{\textsuperscript{9}}&31.325\textcolor{gray}{\textsuperscript{8}}&-&-&4948.237\textcolor{gray}{\textsuperscript{12}}&11121.912\textcolor{gray}{\textsuperscript{13}}&16697.185\textcolor{gray}{\textsuperscript{12}}&-&-&4948.238\textcolor{gray}{\textsuperscript{12}}&11121.912\textcolor{gray}{\textsuperscript{13}}&16697.186\textcolor{gray}{\textsuperscript{12}}&-&-\\
GRLBase\citep{grl}&39.990\textcolor{gray}{\textsuperscript{4}}&33.840\textcolor{gray}{\textsuperscript{12}}&31.121\textcolor{gray}{\textsuperscript{13}}&-&-&\textcolor{red}{3937.013}\textcolor{gray}{\textsuperscript{1}}&\textcolor{red}{9073.913}\textcolor{gray}{\textsuperscript{1}}&\textcolor{red}{13959.078}\textcolor{gray}{\textsuperscript{1}}&-&-&\textcolor{red}{3937.013}\textcolor{gray}{\textsuperscript{1}}&\textcolor{red}{9073.913}\textcolor{gray}{\textsuperscript{1}}&\textcolor{red}{13959.079}\textcolor{gray}{\textsuperscript{1}}&-&-\\
GRLSmall\citep{grl}&39.544\textcolor{gray}{\textsuperscript{10}}&33.679\textcolor{gray}{\textsuperscript{14}}&31.027\textcolor{gray}{\textsuperscript{15}}&-&-&4170.853\textcolor{gray}{\textsuperscript{4}}&9585.326\textcolor{gray}{\textsuperscript{4}}&14742.787\textcolor{gray}{\textsuperscript{4}}&-&-&4170.853\textcolor{gray}{\textsuperscript{4}}&9585.327\textcolor{gray}{\textsuperscript{4}}&14742.788\textcolor{gray}{\textsuperscript{4}}&-&-\\
GRLTiny\citep{grl}&39.205\textcolor{gray}{\textsuperscript{16}}&33.197\textcolor{gray}{\textsuperscript{15}}&30.556\textcolor{gray}{\textsuperscript{16}}&-&-&4384.364\textcolor{gray}{\textsuperscript{6}}&10079.017\textcolor{gray}{\textsuperscript{6}}&15488.316\textcolor{gray}{\textsuperscript{6}}&-&-&4384.364\textcolor{gray}{\textsuperscript{6}}&10079.018\textcolor{gray}{\textsuperscript{6}}&15488.317\textcolor{gray}{\textsuperscript{6}}&-&-\\

    \bottomrule
    \end{tabular}
    }
    
    \caption{Comparison between our proposed FSDS metric and $\ell_2$ norm in both frequency domain and image domain. Items with the highest mean values are highlighted in red and secondary mean values in blue. The gray superscripts denote the ranking order. To demonstrate Parseval's theorem, we omit the mean operation when calculating $\ell_2$ norm.}
    \label{tab:l2}
\end{table}
\section{How G(I) is Trained and How G(I) and H(I) Vary Dependently}
\begin{figure}
    \centering
    \includegraphics[scale=0.2]{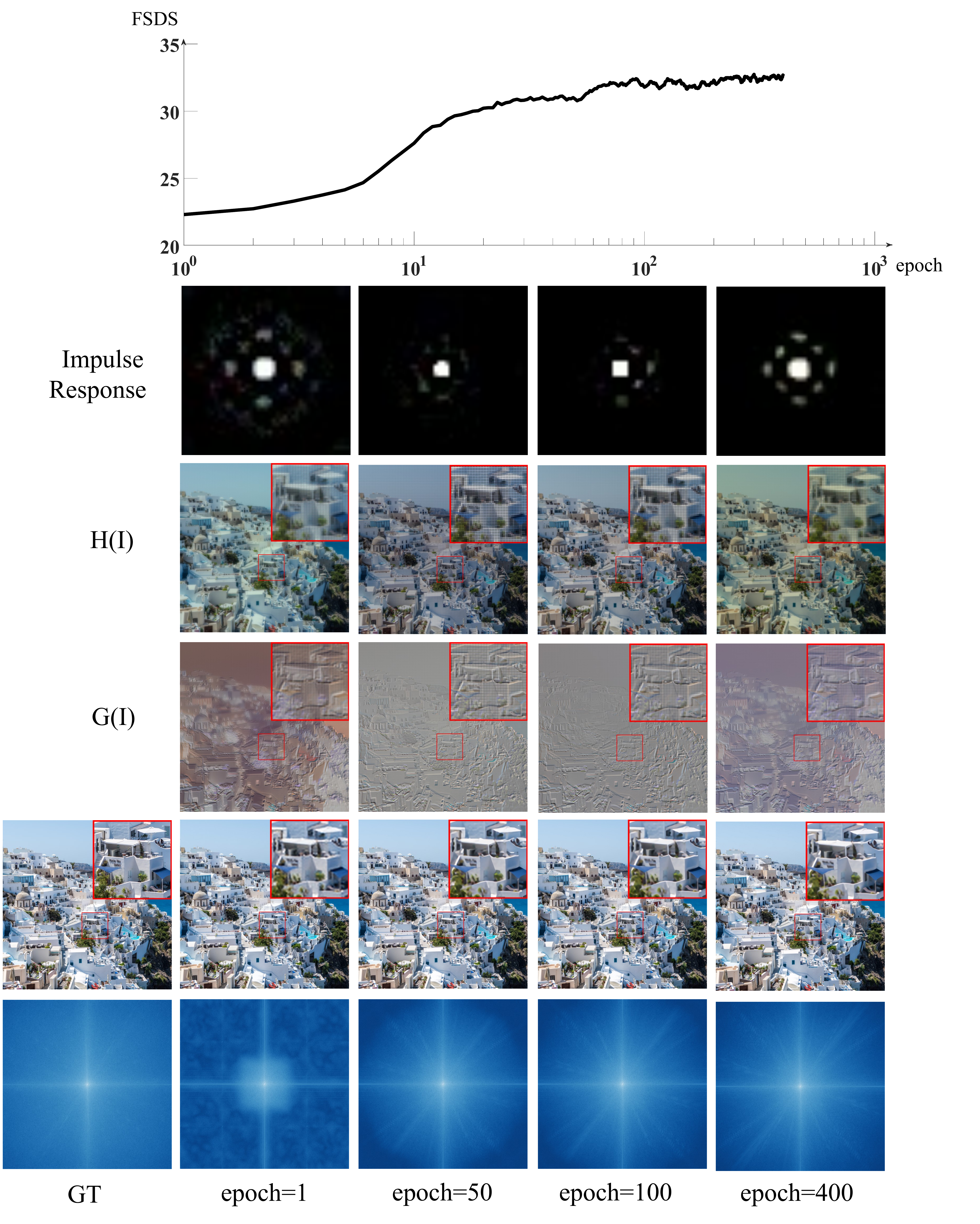}
    \caption{$N(I)$, $H(I)$ and $G(I)$ from different epoches during training.}
    \label{fig:during_training}
\end{figure}
 To better explain how $G(I)$ is trained and how $G(I)$ and $H(I)$ vary dependently, we conduct a new experiment on observing their varying progress during training. We train a vanilla RDN \citep{rdn} network from scratch and obtain the impulse response, $H(I)$, $G(I)$, and $N(I)$ from each epoch (see \Cref{fig:during_training}). As shown in the figure, in the second row, the $sinc$ phenomenon becomes clearer along with the training process, this indicates that the network is gradually learning the low-pass filter. The same conclusion can be further supported by observing the variation in $H(I)$ as shown in the third row. In this row, $H(I)$ illustrates the phenomenon of the grid-like distortion vanishing while low-frequency areas getting smoother. The fourth row depict the plot of $G(I)$. This row demonstrates that $G(I)$ is capturing more and more high-frequency information, such as edges. This is very significant especially when comparing the result from Epoch 1 and Epoch 50. Such increase in high-frequncy information can also be found in the frequency spectrum. During the entire training process, the network fails to recover the words on the wall (please see the magnified area, there are words on the wall next to the blue awning). Observing $G(I)$, we can find no sign of words as well. This indicates that the network treats it as low-frequency information and ignores it, pointing the way for future network improvements.
\section{Code Repositories}
\begin{table}[H]
    \centering
    \begin{tabular}{lp{8cm}lll}
    \toprule
         Abbreviate& Title& Publication&Year&Code Link  \\
         \midrule
         EDSR \citep{edsr}&Enhanced Deep Residual Networks for Single Image Super-Resolution& CVPRW& 2017& \href{https://github.com/sanghyun-son/EDSR-PyTorchGithub}{Github}\\
         LIIF\cite{liif}&Learning Continuous Image Representation with Local Implicit Image Function&CVPR&2021&\href{https://github.com/yinboc/liif}{Github}\\
         OPE-SR\cite{OPE-SR}&OPE-SR: Orthogonal Position Encoding for Designing a Parameter-Free Upsampling Module in Arbitrary-Scale Image Super-Resolution&CVPR&2023&\href{https://github.com/gaochao-s/ope-sr}{Github}\\
         SRNO\cite{SRNO}&Super-Resolution Neural Operator&CVPR&2023&\href{https://github.com/2y7c3/Super-Resolution-Neural-Operator}{Github}\\
         LTE \citep{LTE}&Local Texture Estimator for Implicit Representation Function&CVPR&2022&\href{https://github.com/jaewon-lee-b/lte}{Github}\\
         RDN \citep{rdn}&Residual Dense Network for Image Super-Resolution &CVPR&2018&\href{https://github.com/yulunzhang/RDN}{Github}\\
         SwinIR \citep{swinir}&SwinIR: Image Restoration Using Swin Transformer&ICCV&2021&\href{https://github.com/JingyunLiang/SwinIR}{Github}\\
         ITSRN \citep{ITSRN}&Implicit Transformer Network for Screen Content Image Continuous Super-Resolution&NeurIPS&2021&\href{https://github.com/codyshen0000/ITSRN}{Github}\\
         RCAN \citep{RCAN}&Image Super-Resolution Using Very Deep Residual Channel Attention Networks&ECCV&2018&\href{https://github.com/yulunzhang/RCAN}{Github}\\
         HAT \citep{hat}&Activating More Pixels in Image Super-Resolution Transformer&CVPR&2023&\href{https://github.com/XPixelGroup/HAT}{Github}\\
         HDSRNet \citep{hdsr}&Heterogeneous Dynamic Convolutional Network in Image Super-Resolution&Arxiv&2024&\href{https://github.com/hellloxiaotian/HDSRNet}{Github}\\
         GRL \citep{grl}& Efficient and Explicit Modelling of Image Hierarchies for Image Restoration&CVPR&2023&\href{https://github.com/ofsoundof/GRL-Image-Restoration}{Github}\\
         \bottomrule
    \end{tabular}
    \caption{The papers and repository links used in this paper.}
    \label{tab:code_link}
\end{table}

\end{document}